%% file: _MAIN_CR.tex
\documentclass{article} % For LaTeX2e
\usepackage{iclr2026_conference,times}

\input{math_commands.tex}

\usepackage[dvipsnames]{xcolor} % for named colors
\usepackage[colorlinks=true,
            linkcolor=purple,
            citecolor=blue,
            urlcolor=BrickRed]{hyperref}
\usepackage{url}

\usepackage[utf8]{inputenc} % allow utf-8 input
\usepackage[T1]{fontenc}    % use 8-bit T1 fonts
\usepackage{booktabs}       % professional-quality tables
\usepackage{amsfonts}       % blackboard math symbols
\usepackage{nicefrac}       % compact symbols for 1/2, etc.
\usepackage{microtype}      % microtypography
\usepackage{xcolor}         % colors
\usepackage{subcaption} 
 \usepackage{multirow}
 
\usepackage{amsmath}
\usepackage{amssymb}
\usepackage{mathtools}
\usepackage{amsthm}
\usepackage{cleveref,lipsum,xspace}
\usepackage{graphicx,booktabs,csquotes,enumitem,lipsum,wrapfig}

\theoremstyle{plain}
\newtheorem{theorem}{Theorem}[section]

\newtheorem{lemma}[theorem]{Lemma}
\newtheorem{corollary}[theorem]{Corollary}
\theoremstyle{definition}

\theoremstyle{remark}
\newtheorem{remark}[theorem]{Remark}

\newtheorem*{corollary*}{Corollary}
\newtheorem*{proposition*}{Proposition}

\usepackage[textsize=tiny]{todonotes}
\newcommand{\V}{\mathcal{V}}

\newcommand{\G}{\mathcal{G}}

\newcommand{\wh}{\widehat}

\usepackage{xcolor}
\newcommand{\best}[1]{\bf \textcolor{blue}{#1}}
\newcommand{\second}[1]{\bf \textcolor{purple}{#1}}
\newcommand{\third}[1]{\bf \textcolor{teal}{#1}}

\newif\ifshowrevisions
\showrevisionstrue    % show revisions in color
\newcommand{\rev}[1]{%
  \ifshowrevisions
    \textcolor{black}{#1}%
  \else
    #1%
  \fi
}

\newcommand{\method}{\textsc{TopoFormer}\xspace}

\title{TopoFormer: Topology Meets Attention for Graph Learning}
\author{Md Joshem ~Uddin \thanks{Equal contribution.} \\
Department of Mathematical Science\\
The University of Texas at Dallas\\
Richardson, TX 75080, USA \\
\texttt{mdjoshem.uddin@utdallas.edu} \\
\And
Astrit~Tola \footnotemark[1]  \\
Department of Mathematics \\
 Florida State University \\
 Tallahassee, FL 32306, USA \\
\texttt{atola@fsu.edu} \\
\And
Cuneyt~Gurcan~Akcora \\
AI Initiative \\
University of Central Florida \\
 Orlando, FL 32816, USA \\
\texttt{cuneyt.akcora@ucf.edu} \\
\And
Baris~Coskunuzer \\
Department of Mathematical Science \\
The University of Texas at Dallas\\
Richardson, TX 75080, USA \\
\texttt{coskunuz@utdallas.edu} \\
}

\iclrfinalcopy % Uncomment for camera-ready version, but NOT for submission.
\begin{document}

\maketitle

\begin{abstract}
We introduce \method, a lightweight and scalable framework for graph representation learning that encodes topological structure into attention-friendly sequences. At the core of our method is \textit{Topo-Scan}, a novel module that decomposes a graph into a short, ordered sequence of topological tokens by slicing over node or edge filtrations. These sequences capture multi-scale structural patterns, from local motifs to global organization, and are processed by a Transformer to produce expressive graph-level embeddings. Unlike traditional persistent homology pipelines, \textit{Topo-Scan} is parallelizable, avoids costly diagram computations, and integrates seamlessly with standard deep learning architectures. We provide theoretical guarantees on the stability of our topological encodings and demonstrate state-of-the-art performance across graph classification and molecular property prediction benchmarks. Our results show that \method matches or exceeds strong GNN and topology-based baselines while offering predictable and efficient compute. This work opens a new path for parallelizable and unifying approaches to graph representation learning that integrate topological inductive biases into attention frameworks.
\end{abstract}
\noindent\textbf{Code:} \url{https://github.com/joshem163/TOPOFORMER}
%\vspace{-.1in}

\section{Introduction} \label{sec:intro}
\input{sections_CR/0-intro}

\section{Motivation and Background} \label{sec:background}
\input{sections_CR/1-background}

\section{Topo-Transformers} \label{sec:Top-TR-method}
\input{sections_CR/2-topo-transformers}

\input{sections_CR/2b-theory}

\vspace{-.1in}
\section{Experiments} \label{sec:experiments}
\input{sections_CR/3-experiments}

%\vspace{-.1in}

\section{Conclusion} \label{sec:conclusion}

Fixed-size, transferable representations remain a central challenge in graph learning. We introduce \method, a scalable framework that encodes multi-scale topological structure into attention-ready sequences. By replacing full persistence diagrams with lightweight, slice-wise invariants via Topo-Scan, our method integrates seamlessly with transformer architectures while offering theoretical stability guarantees. \method achieves state-of-the-art results across graph classification and molecular property prediction tasks, with predictable compute and strong generalization. Looking ahead, we aim to extend this framework toward graph foundation models by combining topological and spectral signals through large-scale self-supervised pretraining, and by adapting to dynamic and heterogeneous graphs via learnable filtrations.
%\clearpage

\subsubsection*{Acknowledgments}
This work was partially supported by Canadian NSERC Discovery Grant RGPIN-2020-05665: Data Science on Blockchains, National Science Foundation under grants DMS-2220613, and DMS-2229417. The authors acknowledge the Texas Advanced Computing Center (TACC) at  UT Austin for providing computational resources that have contributed to the research results reported within this paper. 

\bibliography{references}
\bibliographystyle{iclr2026_conference}

\clearpage

\appendix
\input{sections_CR/4-appendix}

\end{document}

%% file: math_commands.tex
\usepackage{amsmath,amsfonts,bm}

\def\eqref#1{equation~\ref{#1}}
\def\1{\bm{1}}

\DeclareMathAlphabet{\mathsfit}{\encodingdefault}{\sfdefault}{m}{sl}
\SetMathAlphabet{\mathsfit}{bold}{\encodingdefault}{\sfdefault}{bx}{n}

\newcommand{\R}{\mathbb{R}}

%% file: sections_CR/0-intro.tex
Graphs are powerful data structures for modeling relational data in biology, chemistry, and social networks. While recent advances in graph learning have produced strong task-specific models, most architectures lack the generalization of foundation models in vision and language~\citep{radford2021learning,bubeck2023sparks}. Achieving such general-purpose capability in graphs is difficult due to their irregular, non-Euclidean structure~\citep{wu2020comprehensive}, which complicates the design of transferable inductive biases.

Topological Data Analysis (TDA) provides a principled approach by encoding global and local structure in a way that is stable to perturbations and insensitive to node identity~\citep{hensel2021survey,pham2025topological}. In principle, Persistent Homology (PH) offers a canonical summary of how connectivity and cycles evolve across scales, and has proven useful across domains~\citep{skaf2022topological,obayashi2022persistent,shultz2023applications}. In practice, however, PH pipelines depend on persistence diagrams, which require expensive global reductions and a subsequent vectorization step (e.g., images, landscapes, curves) whose design can materially affect downstream performance. On graphs, common sublevel/superlevel filtrations also tend to \emph{early-saturate}, high-valued vertices activate early, quickly filling the complex and suppressing late-emerging features. These computational and modeling frictions have slowed the adoption of PH in graph representation learning, despite the clear promise of topological signals for multi-resolution structure.

To overcome these barriers, we develop a lightweight yet expressive alternative that bypasses full persistence diagrams while retaining multi-resolution topological information in a form consumable by transformers. We introduce \method, a scalable framework that integrates topological descriptors with attention architectures. At its core is Topo-Scan, a module that converts a graph into a short, ordered sequence of topological tokens across multiple resolutions. 
These sequences are directly consumable by attention mechanisms~\citep{vaswani2017attention}, enabling efficient graph-level representations within the same token-based interface used by large-scale transformer models. We therefore view \method as a step toward topology-aware graph foundation models, rather than a full foundation model itself, and leave large-scale pretraining on heterogeneous graph corpora to future work.
\method achieves strong performance on graph classification and molecular property prediction under unified evaluation protocols, with theoretical guarantees on the stability of its topological encodings. Our Contributions are as follows:

\vspace{-.1in}

\begin{itemize}[left=0pt]
\item We introduce a scalable method for turning topological structure into attention-ready sequences, enabling transformers to process graphs without relying on node embeddings or heavy preprocessing.
\item We propose a new framework that bridges topological data analysis and deep learning, capturing both local and global graph structure through a unified attention mechanism.
\item We conduct a comprehensive evaluation across diverse graph learning tasks, demonstrating strong performance on both graph classification and molecular property prediction benchmarks.
\item We provide theoretical guarantees on the robustness of our representations and show that our approach offers predictable and efficient compute, making it practical for large-scale applications.
\end{itemize}

\vspace{-.1in}

%% file: sections_CR/1-background.tex
This section reviews recent work and highlights the need to integrate advanced topological methods with modern ML to overcome limitations in graph representation learning.

\noindent \textbf{Persistent Homology for Graphs.} \quad
Persistent Homology (PH) was first defined for filtered simplicial complexes in the early 2000s~\citep{edelsbrunner2002topological,zomorodian2005computing}.
Early applications centered on point clouds $\mathcal{X}\subset\mathbb{R}^N$, where Vietoris–Rips filtrations generate nested complexes
$\Delta_1(\mathcal{X}) \subset \Delta_2(\mathcal{X}) \subset \cdots$, allowing topological features to be tracked across scales~\citep{carlsson2009topology}.
The persistence diagram $\mathrm{PD}_k(\mathcal{X})=\{[b_i,d_i)\}$ records births and deaths of $k$-dimensional features, with longer intervals $(d_i-b_i)$  interpreted as more persistent and thus more structurally significant~\citep{dey2022computational}.

PH has since been applied to graphs and images.
For graphs, two principal approaches are used.
\emph{Power (distance) filtrations} treat nodes as a point cloud with graph distances as pairwise distances, then build a Rips filtration~\citep{aktas2019persistence}, which is typically computationally heavy.
A more practical alternative in graph learning is \emph{sublevel filtrations}, where a scalar node/edge function $f$ induces nested subgraphs that are lifted to simplicial complexes via cliques (upper–star extension is standard).
A key interpretability difference follows: in power filtrations, bar lengths reflect geometric scale; in sublevel filtrations, they reflect \emph{differences in $f$} rather than physical size, so “long bars $\Rightarrow$ important features” need not hold universally.
Poorly chosen $f$ may yield many short-lived features or early saturation, while task-aligned or \emph{learnable} filtrations can mitigate these effects~\citep{hofer2020graph}.
Rather than viewing this as an intrinsic weakness, we take it as motivation to design fixed-budget, stable summaries that integrate smoothly with modern ML (See App.~\ref{appendix:signalAnalysis} for discussion).

The standard PH pipeline for graphs has three main steps~\citep{coskunuzer2024topological}: \emph{filtration, persistence computation}, and \emph{vectorization}.
Given a graph $\mathcal{G}=(\mathcal{V},\mathcal{E})$, a function $f:\mathcal{V}\to\mathbb{R}$ with thresholds $\{\alpha_i\}_{i=1}^N$ induces subgraphs $\mathcal{G}_1 \subseteq \cdots \subseteq \mathcal{G}_N$, where $\mathcal{G}_i$ contains vertices $\{v\in\mathcal{V}\mid f(v)\le \alpha_i\}$.
Lifting each $\mathcal{G}_i$ to its clique complex $\widehat{\mathcal{G}}_i$ yields a filtration $\{\widehat{\mathcal{G}}_i\}$.
Persistence diagrams $\mathrm{PD}_k(\mathcal{G},f)=\{(b_j,d_j)\}$ record births and deaths of $H_k(\widehat{\mathcal{G}}_i)$ and are typically vectorized via persistence images, landscapes, or Betti curves~\citep{ali2022survey}.

In recent years, the ML community has increasingly recognized the value of topological encodings for graph-level tasks, with PH-based methods showing strong results across domains~\citep{immonen2024going,demir2022todd,verma2024topological,loiseaux2024stable,horn2021topological,chen2024topogcl}.
Despite this promise, two bottlenecks hinder broader adoption: (1) the computational overhead of persistence computations in large pipelines~\citep{otter2017roadmap}, and (2) the difficulty of choosing vectorizations that align with downstream objectives~\citep{ali2022survey}.
Our \method framework addresses both by producing a compact \emph{sequence} of stable, low-cost topological tokens that feed directly into attention layers, thereby bypassing full persistence diagrams and bespoke vectorizations while remaining compatible with efficient graph-specific computations.

Recent methods learn neural approximations of persistence-based topological features to reduce the cost of exact PH. RipsNet \citep{de2022ripsnet} estimates Rips persistence diagrams for point clouds directly from raw data, while \cite{yan2022neural} approximate graph topological features with a GNN. Our approach is complementary: instead of approximating persistence diagrams, Topo-Scan bypasses global PH and directly builds short interlevel topological sequences tailored to Transformer encoders.

\noindent \textbf{Transformers.} \quad
Transformers~\citep{vaswani2017attention} underpin transferable models in language and vision~\citep{devlin2018bert,dosovitskiy2020image} by learning from \emph{ordered} token sequences with long-range dependencies.
On graphs, adapting attention is challenging due to variable size, permutation invariance, and irregular connectivity.
Our design sidesteps these issues: Topo-Scan yields a short, 1D \emph{ordered} sequence of topological tokens with a fixed channel width, so positional encodings and attention operate in their native regime, without graph-specific heavy machinery.
This makes transformers a natural, efficient backend for multi-resolution structural signals. 
By contrast, recent graph transformer models such as Graphormer~\citep{ying2021transformers}, GPS~\citep{rampavsek2022recipe}, and related architectures operate directly on node tokens and inject structure via shortest-path or Laplacian-based positional encodings and attention biases. In \method, each graph is first compressed into a short sequence of topological tokens, so attention runs on a fixed-length, purely topological sequence rather than on all nodes of the original graph.

\noindent \textbf{Molecular Property Prediction.} \quad
Molecular property prediction (MPP) is central to drug discovery (ADMET).
Classical pipelines use engineered fingerprints with RF/SVMs~\citep{cereto2015molecular}; deep learning extends to MLPs on fingerprints, sequence models on SMILES~\citep{rong2020self}, and GNNs on molecular graphs~\citep{wieder2020compact}, with recent 3D methods trading accuracy for higher compute and sensitivity to rotations~\citep{gasteiger2021gemnet,li2022deep}.
Despite progress, DL does not always surpass strong classical baselines on realistic benchmarks~\citep{janela2022simple,valsecchi2022predicting}, motivating transformer variants~\citep{sultan2024transformers}, geometric models~\citep{liu2022spherical}, and topological approaches~\citep{demir2022todd,loiseaux2023stable}.
Evaluation protocols also vary (e.g., scaffold vs.\ random splits), affecting reported generalization.
Our approach unifies robust topological structure with a scalable attention backend, providing an effective, split-agnostic representation for MPP.

%% file: sections_CR/2-topo-transformers.tex
TDA captures multi-scale structural patterns while offering robustness to noise, making it attractive for representation learning. However, the standard Persistent Homology pipeline, consisting of filtration construction, persistence diagram computation, and vectorization, introduces inefficiencies and lacks adaptability, particularly in graph settings. 
 \begin{wrapfigure}{r}{3in}
 \vspace{-.1in}
    \centering
    \includegraphics[width=\linewidth]{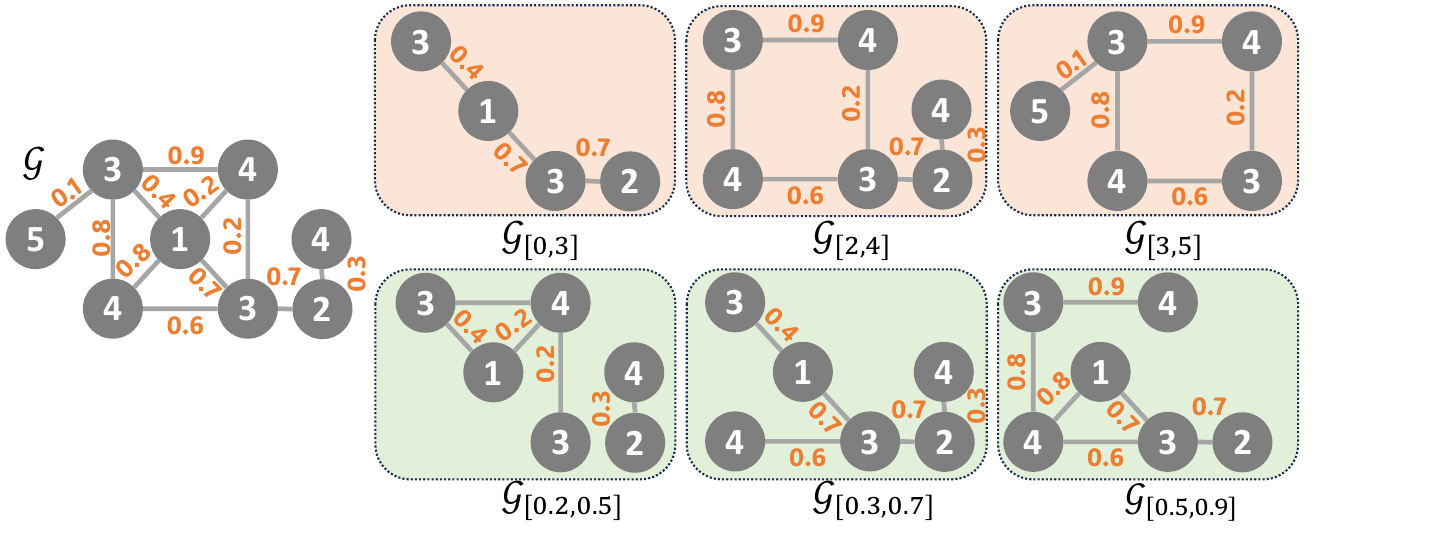} 
 \caption{\footnotesize \textbf{Topo-Scan}. Topo-Scan decomposes a graph into sequential topological slices via node and edge filtrations. The top row shows node-based filtrations; the bottom row, edge-based ones.}
    \label{fig:topoScan}
    \vspace{-.2in}
\end{wrapfigure}
While persistence diagram computation is standardized, vectorization remains ad hoc and significantly impacts model performance~\citep{ali2022survey}. Our goal is to develop an efficient and scalable alternative to this workflow for graph representation learning.

Our first insight is that the \textit{strict nestedness condition required in PH is not always necessary for graphs}. Unlike point clouds, graphs inherently encode structural relationships that permit more flexible and direct extraction of topological features. Building on this observation, we bypass persistence diagrams and vectorization by directly extracting topological sequences from structured graph slices. This shift enables efficient and adaptable pattern extraction and forms the foundation of a scalable learning framework.

\noindent {\bf Topo-Scan.} \quad Traditional sublevel filtrations on graphs often saturate rapidly, and once most nodes join the subgraph at low thresholds, little new structure emerges and important patterns at larger scales are lost. Topo-Scan overcomes this by first imposing a directional hierarchy via a scalar function $f\colon \mathcal{V}\to\mathbb{R}$, then slicing the graph into a sequence of overlapping subgraphs along increasing values of $f$. Rather than waiting for a single threshold to engulf the entire graph, each slice captures fresh topological information, such as connectivity changes and emerging loops, without early collapse. We compute basic invariants (e.g.\ Betti numbers) on each slice to form a compact, ordered signature sequence. Feeding these ordered descriptors into a transformer lets the model attend to structure at every scale, ensuring no signal is lost to premature saturation (See Fig.~\ref{fig:toposcan-PH} for a toy example).

Let $f: \mathcal{V} \to \mathbb{R}$ be a filtration function defined on the vertices, with thresholds $\alpha_0 = \min_{v \in \mathcal{V}} f(v)<\alpha_1<\dots <\alpha_N = \max_{v \in \mathcal{V}} f(v)$. In most cases, the thresholds are selected either as evenly spaced values or based on quintiles. Next, for each $\alpha_i$, we define $\mathcal{V}_i = \{v_r \in \mathcal{V} \mid \alpha_i \leq f(v_r) \leq \alpha_{i+m}\}$ and $\mathcal{G}_i$ as the induced subgraph $\mathcal{G}_i = (\mathcal{V}_i, \mathcal{E}_i)$, where $\mathcal{E}_i = \{e_{rs} \in \mathcal{E} \mid v_r, v_s \in \mathcal{V}_i\}$. The clique complex of $\mathcal{G}_i$, denoted $\wh{\mathcal{G}}_i$, forms a sequence $\{\wh{\mathcal{G}}_i\}$ called \textit{slicing}. We call this process \textit{Topo-Scan}, which decomposes graphs into topological slices, similar to medical scans revealing structural layers. Leveraging a hierarchical structure, it adapts to node and edge filtrations, weighted graphs, and diverse relations, capturing local and global topological patterns for robust, scalable representation learning. It remains robust by tracking short-lived features effectively and is scalable through parallelized slice extraction.

\begin{wrapfigure}{r}{3in}
\vspace{-.2in}
    \centering
    \begin{subfigure}[b]{0.49\linewidth}
        \includegraphics[width=\textwidth]{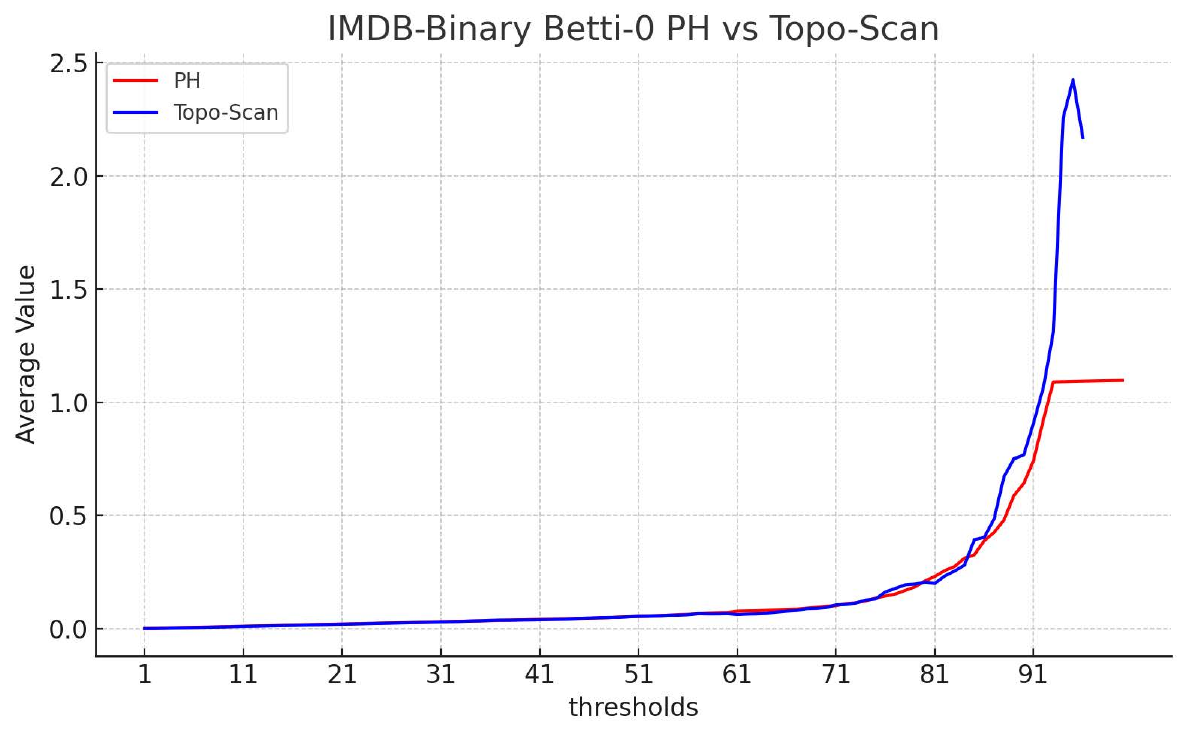}
        \caption{\footnotesize{IMDB-B - Betti-0}}
        \label{fig:imdb-b}
    \end{subfigure}
    \begin{subfigure}[b]{0.49\linewidth}
        \includegraphics[width=\textwidth]{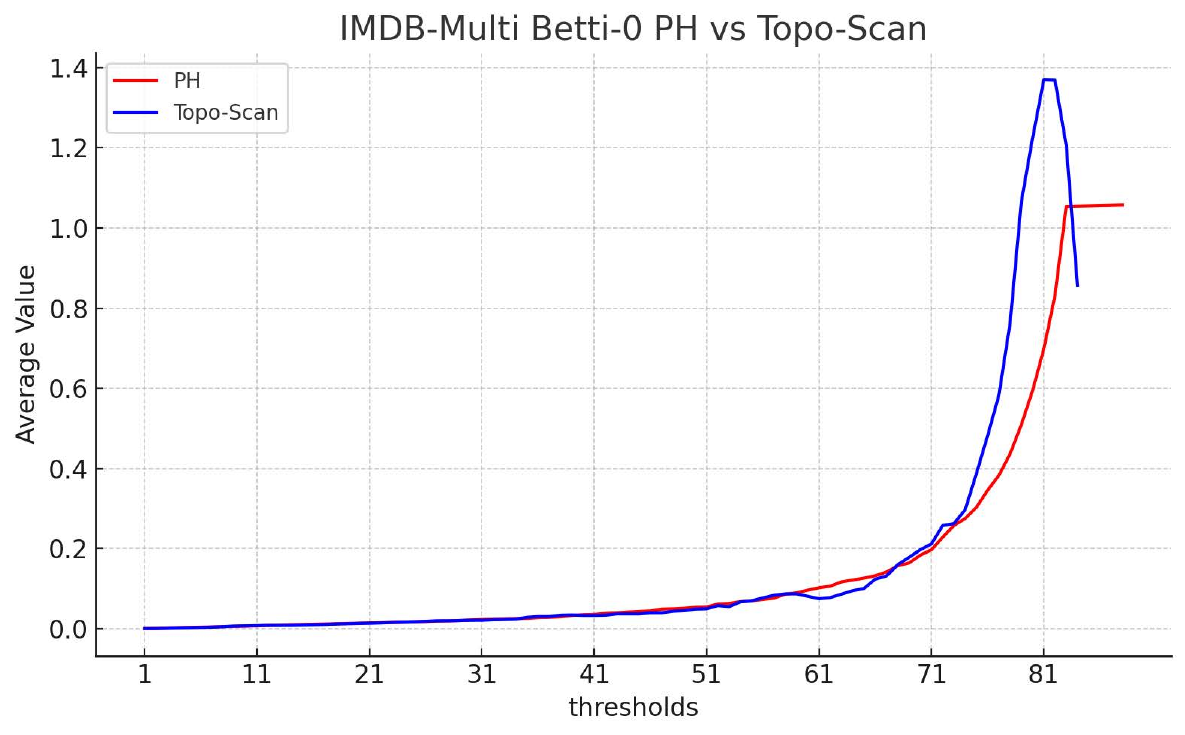}
        \caption{\footnotesize{IMDB-M - Betti-0}}
        \label{fig:imdb-m}
    \end{subfigure}
\caption{\footnotesize\textbf{PH vs.\ Topo-Scan.} 
Average Betti-0 counts under degree filtration with 100 thresholds on IMDB-B and IMDB-M. 
Standard PH saturates early, causing a sharp decline near the end of the curve. 
Topo-Scan avoids early saturation and continues to reveal late-emerging features, often surpassing PH counts at higher thresholds.}
    \label{fig:PHvsTopoScan2}
    \vspace{-.2in}
\end{wrapfigure}

The resolution (\textit{N}) determines the number of slices, while the thickness (\textit{m}) specifies the range of nodes included in each slice. After constructing $\{\wh{\mathcal{G}}_i\}$, we compute four outputs for each slice: $\beta_0(\wh{\mathcal{G}}_i)$ (Betti-0, connected components), $\beta_1(\wh{\mathcal{G}}_i)$ (Betti-1, cycles/holes), $|\mathcal{V}_i|$ (node count), and $|\mathcal{E}_i|$ (edge count). These outputs form ordered sequences of size $N$, such as $\wh{\beta}_k(\mathcal{G}) = \{\beta_k(\wh{\mathcal{G}}_i)\}_{i=1}^N$ for $k=0,1$. While $\wh{\beta}_k(\mathcal{G})$ are the primary topological outputs, $\{|\mathcal{V}_i|\}$ and $\{|\mathcal{E}_i|\}$ serve as normalization factors (see~\Cref{fig:topoScan}). These sequences are concatenated into a sequence (vector) $\Gamma(\G)$ of length $4N$ where $N$ is the number of slices. 

\begin{figure*}[t]
    \centering
    \includegraphics[width=.9\linewidth]{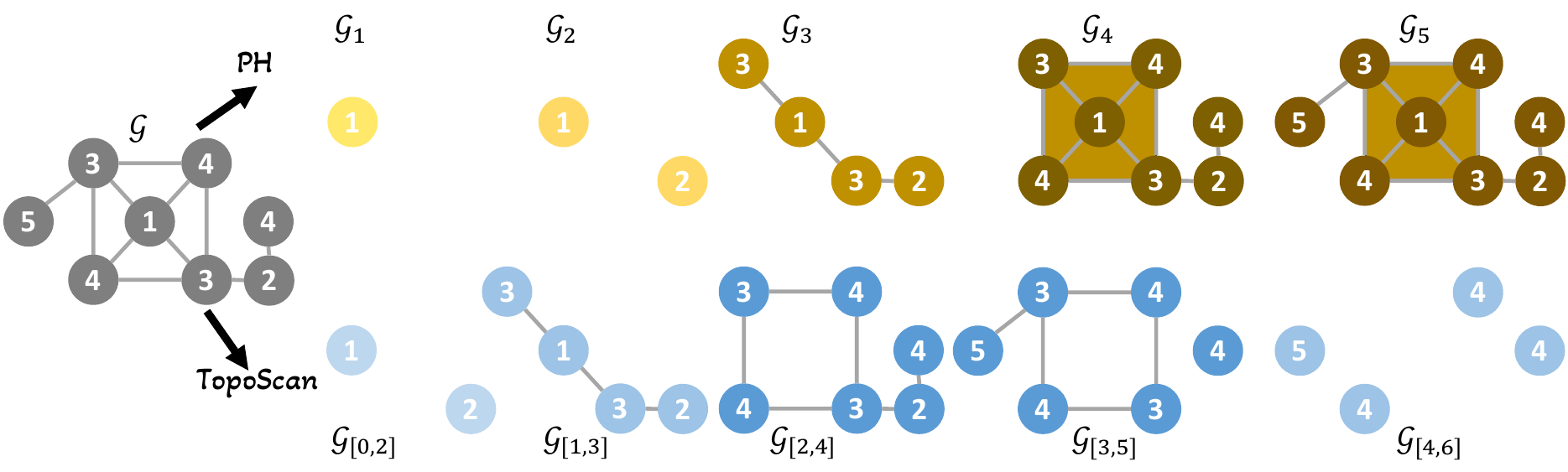} 
\caption{\footnotesize \textbf{Topo-Scan vs. PH.} This toy example highlights the key differences between the Topo-Scan filtration and standard PH filtration, where node values indicate filtration function values. In PH, early-activated nodes quickly saturate the graph, suppressing the emergence of later topological features. As shown, PH yields relatively uninformative barcodes with $\beta_0 = \langle 1, 2, 1, 1, 1 \rangle$ and $\beta_1 = \langle 0, 0, 0, 0, 0 \rangle$, while Topo-Scan captures richer topological dynamics, producing $\beta_0 = \langle 2, 1, 1, 2, 4 \rangle$ and $\beta_1 = \langle 0, 0, 1, 1, 0 \rangle$.
    \label{fig:toposcan-PH}}
    \vspace{-.25in}
\end{figure*}

\iffalse
\begin{algorithm}[h]
\footnotesize
\caption{Topo-Scan for Graphs \label{alg:topo-transformer}}
\begin{algorithmic}
\State \textbf{Input:} Graph $\mathcal{G} = (\mathcal{V}, \mathcal{E})$, Filtration function $f: \mathcal{V} \to \mathbb{R}$, Threshold count $N$, Width parameter $m$
\State \textbf{Output:} Topo-sequence $\wh{\beta}(\mathcal{G}, f, \mathcal{I}, m)$
\State $\{\alpha_i\}_{i=0}^N$ N-quantiles of the set $\{f(v)\}$ \Comment{Thresholds}
\For{$i = 0$ to $N-m$} \Comment{Filtration index}
    \State $\mathcal{V}_i \gets \{v \in \mathcal{V} \mid \alpha_i\leq f(v) \leq \alpha_{i+m}\}$ \Comment{Vertex sets}
    \State $\mathcal{G}_i \gets$ Induced subgraph of $\mathcal{G}$ with vertex set $\mathcal{V}_i$
    \State $\wh{\mathcal{G}}_i \gets$ Clique complex of $\mathcal{G}_i$
\EndFor
\For{$k = 0$ to $1$} \Comment{Topological dimensions}
    \State Obtain sequence $\wh{\beta}^k=\{\beta^k_i\}$ s.t. $\beta^k_i=\mbox{rank}(H_k(\wh{\G}_i))$
\EndFor
\State \textbf{Return} $\wh{\beta}(\mathcal{G}, f, N, m) = \wh{\beta}_0 \mathbin\Vert \wh{\beta}_1$ \Comment{Seq. of $2(N-m+1)$ numbers}
\end{algorithmic}
\end{algorithm}
\fi

A key distinction from PH lies in activation: PH includes all nodes up to a threshold, causing early saturation in dense graphs, while Topo-Scan uses a sliding window to preserve late-emerging features and capture fine structure (see Fig.~\ref{fig:PHvsTopoScan2} and App.~\ref{appendix:signalAnalysis}). Slice thickness $m$ controls locality, allowing flexibility across datasets. Its localized design ensures robustness to noise and enables parallelization, making it ideal for scalable ML workflows.

\rev{\textit{Vectorization Choice.} Among many possible vectorizations, we deliberately use a very low-dimensional token per slice, $(\beta_0,\beta_1,|\mathcal{V}_i|,|\mathcal{E}_i|)$. Global vectorizations such as persistence images or landscapes aggregate information over the entire filtration into a single feature vector, which largely destroys the \emph{sequential} structure that Topo-Scan is designed to expose. In contrast, our Betti-based tokens preserve how components and cycles evolve across slices; richer per-slice invariants could be plugged in, but we focus on this minimal choice to isolate the benefit of the sequential representation.}

\noindent {\bf \method.} \quad We use the \textit{ordered sequences} of topological features in Transformers, which excel at capturing sequential structures and complex dependencies through self-attention mechanisms, making them well-suited for tasks requiring order and contextual understanding. While traditional PH processes a sequence of simplicial complexes $\{\Delta_i\}$, this sequential structure is often lost during the persistence diagram and vectorization stages, where outputs are transformed into unordered vectors. Topo-Scan preserves the sequential nature of topological features and aligns them with transformers' ability to model positional relationships.

\noindent \textbf{ML Model.} \quad Our transformer architecture (Fig.~\ref{fig:topoTr}) consists of an embedding layer that processes input sequences, a transformer encoder that captures hierarchical dependencies through self-attention mechanisms, and a fully connected classification head that maps learned representations to output predictions. To enhance generalization and mitigate overfitting, we integrate regularization techniques, such as dropout and weight decay, ensuring robustness across diverse graph learning tasks. Formally, given an input sequence $\mathbf{x} \in \mathbb{R}^{m \times T \times D}$, where $m$ is the number of graphs, $T$ the sequence length, and $D$ the token dimensionality, the sequence is embedded via $\mathbf{E}$, with positional encoding $\mathbf{P}$ added. This processed sequence is then passed through a multi-layer transformer encoder, producing an output representation $\mathbf{z}$, which is flattened and normalized before being classified via a fully connected layer.

Expanding this model, we introduce a dual-transformer framework with an integrated multi-layer perceptron (MLP) classifier to handle diverse input modalities. The model processes three distinct inputs: $ \mathbf{X}_1 $ and $ \mathbf{X}_2 $ through independent transformers $ \mathcal{T}_1 $ and $ \mathcal{T}_2 $, and $ \mathbf{X}_3 $ through an MLP $ \mathcal{M} $. Their respective outputs $ \mathbf{z}_1 $, $ \mathbf{z}_2 $, and $ \mathbf{z}_3 $ are combined using a learnable weighted sum, allowing the model to adaptively balance feature contributions:

\centerline{$\mathbf{z}_{\text{combined}} = \alpha \cdot \mathbf{z}_1 + \beta \cdot \mathbf{z}_2 + (1 - \alpha - \beta) \cdot \mathbf{z}_3$}

where $ \alpha $ and $ \beta $ are learned during training. The aggregated feature vector is then batch-normalized and passed through a fully connected layer to produce the final classification output: \quad 
$\hat{y} = \mathbf{FC}(\mathbf{z}_{\text{combined}})$ \quad 
where $ \hat{y} \in \mathbb{R}^C $ represents the class probabilities, with $ C $ being the number of output classes. 
Model details are given in~\Cref{sec:topo-tr-arch}.

\begin{figure*}[t]
    \centering
    \includegraphics[width=.8\linewidth]{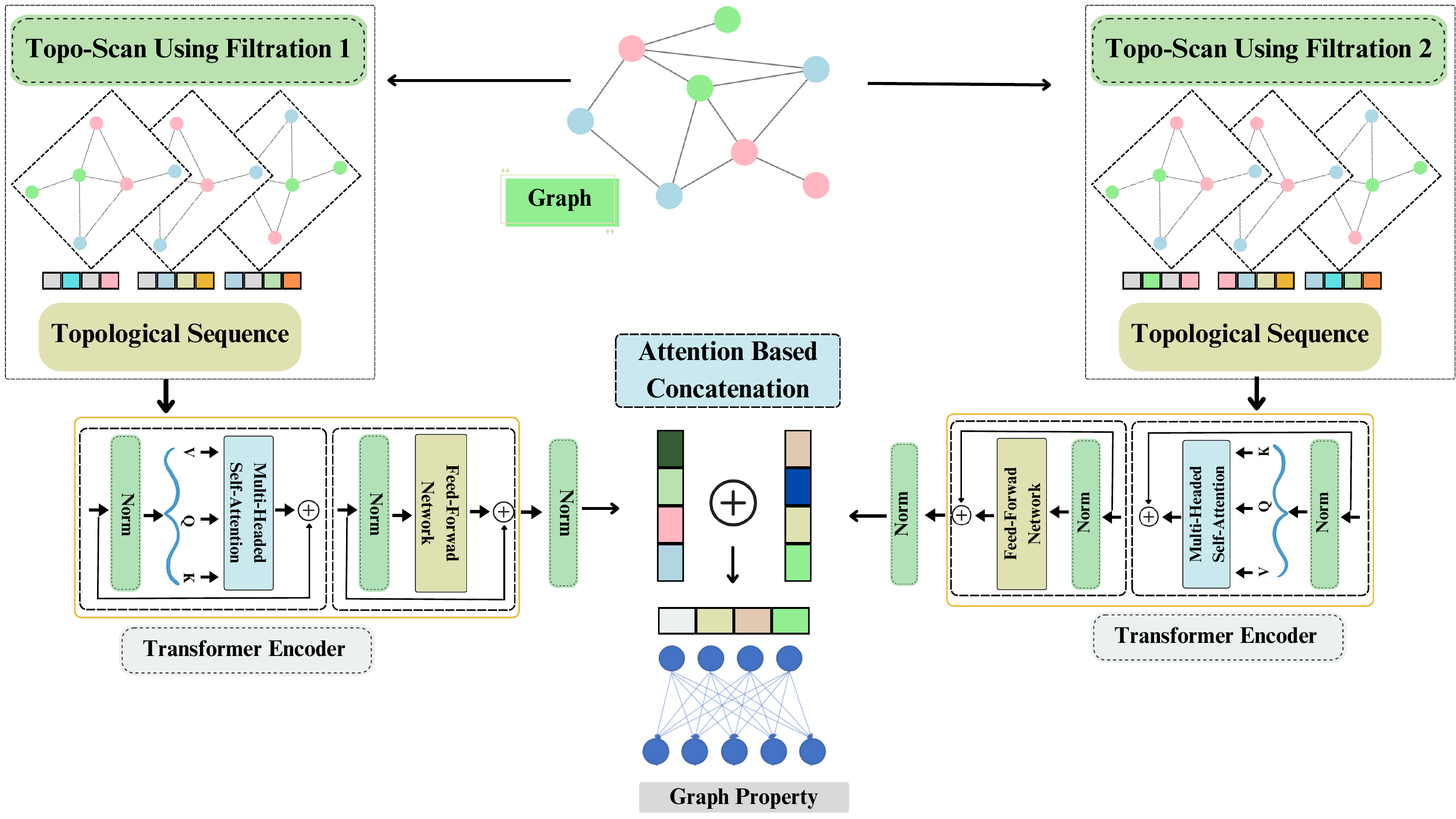} 
    \caption{\footnotesize \textbf{TopoFormer Flowchart}: Given an input graph $\mathcal{G}$, sequential substructures are extracted via Topo-Scan. Each substructure is encoded into a four-dimensional topological signature. These sequences are processed by a transformer model, and outputs from multiple filtration functions are fused using attention-based concatenation. A final prediction layer maps the representation to the target graph property.
    \label{fig:topoTr}}
    \vspace{-.2in}
\end{figure*}

%% file: sections_CR/2b-theory.tex
\subsection{Stability of Topo-Scan Sequences}\label{sec:stability}

A useful graph vectorization should be robust: small changes in the filtration signal should not cause large changes in the output sequence. We formalize this for Topo-Scan on the \emph{fixed clique complex} $\widehat G$ of $G=(V,E)$ using upper–star extensions of node functions.

\paragraph{Setup.}
Let $f,g:V\to\R$ be filtration functions, extended to $\widehat G$ by the upper–star rule $\widehat h(\sigma)=\max_{v\in\sigma} h(v)$.
Fix a shared threshold grid $\alpha_0<\cdots<\alpha_N$, window width $m$, stride $s$, and windows $I_t=[\alpha_{ts},\alpha_{ts+m}]$ for $t=0,\dots,T{-}1$, where $T=\lfloor (N-m)/s\rfloor +1$.
For $k\in\{0,1\}$, the $t$-th Topo-Scan token is the interlevel Betti number
\[
\widehat\beta_k^h(t)\ :=\ \dim\, H_k\!\big((\widehat G)^h_{I_t}\big),\qquad h\in\{f,g\}.
\]

\begin{theorem}[Discrete $\ell_1$ stability of Topo-Scan]\label{thm:stability}
There exists $C=C(\widehat G,\{\alpha_i\},m,s)$ such that for $k\in\{0,1\}$,
\[
\big\|\widehat\beta_k(G,f)-\widehat\beta_k(G,g)\big\|_{1}
\ \le\ C\ \, d_B\!\big(M_k^f,M_k^g\big),
\]
where $M_k^h$ denotes the $k$-dimensional \emph{interlevel (level-set) persistence module} of the upper–star filtration induced by $h$ on $\widehat G$, and $d_B$ is the bottleneck distance between such modules.
\end{theorem}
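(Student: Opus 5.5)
The plan is to express each Topo-Scan token $\widehat\beta_k^h(t)$ as a count of intervals in the barcode of the interlevel persistence module $M_k^h$. Stability of that barcode under bottleneck matching then transfers, with a finite-grid correction, to the $\ell_1$ difference of the token sequences.

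\textbf{Step 1 (tokens as barcode counts).} By the structure theorem for interlevel (zigzag / level-set) persistence of the finite filtered complex $\widehat G$, the module $M_k^h$ decomposes into finitely many interval summands $\{J_j^h\}$. Each $J_j^h$ is an interval in the Mayer--Vietoris/extended plane. The dimension $\dim H_k((\widehat G)^h_{I_t})$ equals the number of summands whose support contains the point corresponding to the window $I_t=[\alpha_{ts},\alpha_{ts+m}]$. Hence
\[
\widehat\beta_k^h(t)=\#\{j:\ p_t\in J_j^h\},
\]
so each token is a rank-type evaluation of the barcode at $T$ fixed points $p_t$.

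\textbf{Step 2 (matching and counting).} Let $\varepsilon=d_B(M_k^f,M_k^g)$ and fix an optimal $\varepsilon$-matching of the two barcodes. Each matched pair of intervals has endpoints within $\varepsilon$, and each unmatched interval has ``radius'' at most $\varepsilon$. For a fixed $t$, $|\widehat\beta_k^f(t)-\widehat\beta_k^g(t)|$ is at most the number of intervals that contain $p_t$ in one barcode but whose partner (or the empty interval) does not contain it in the other. Any such interval must have an endpoint within $\varepsilon$ of a coordinate of $p_t$, that is, within $\varepsilon$ of one of $\alpha_{ts},\alpha_{ts+m}$.

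\textbf{Step 3 (the main obstacle: turning counts into a linear bound).} The counting argument alone gives only an indicator-type bound: the difference is zero when $\varepsilon$ is smaller than the gap between barcode endpoints and the grid, and is bounded otherwise. It is not automatically linear in $\varepsilon$. I would exploit finiteness and use two regimes.

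\emph{Large $\varepsilon$.} Both sides are bounded uniformly: each $\widehat\beta_k^h(t)\le n_k$, where $n_k$ is the number of $k$-simplices of $\widehat G$ (for $k=1$ one may use edges). So $\|\cdot\|_1\le 2Tn_k$.

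\emph{Small $\varepsilon$.} Set $\delta=\tfrac12\min_i(\alpha_{i+1}-\alpha_i)>0$. When $\varepsilon<\delta$, the problematic endpoints lie in $\varepsilon$-neighborhoods of at most two grid values per token. The number of intervals with an endpoint in such a neighborhood is at most the total number of simplices, so a crude count still bounds the difference by a constant.

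Combining the regimes, $\|\widehat\beta_k(G,f)-\widehat\beta_k(G,g)\|_1\le 2Tn_k\le (2Tn_k/\delta)\,\varepsilon$ whenever $\varepsilon\ge\delta$. The case $0<\varepsilon<\delta$ is the delicate one, because here the bound must not degenerate. There I would argue that the difference is nonzero only if some endpoint crosses a grid value, and then bound by $2Tn_k\le (2Tn_k/\varepsilon_0)\varepsilon$, where $\varepsilon_0$ is the minimal positive distance between the finitely many critical values of $\widehat f$ and the grid. Because this quantity depends on $f$, I expect to need either a weakened constant or a genericity/discretization assumption: critical values lie on the grid, or $f,g$ are grid-valued. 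Under that assumption any nonzero $\varepsilon$ is at least $\delta$ and the first regime covers everything. I would adopt this assumption explicitly, consistent with $C$ depending on $\{\alpha_i\}$. The final constant is $C=2T\,n_k(\widehat G)/\delta$.
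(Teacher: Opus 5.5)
\textbf{The statement is false as written.} Your Step~3 worry is therefore decisive, not a technicality to work around: no $C=C(\widehat G,\{\alpha_i\},m,s)$ works for all $f,g$. Let $v$ be an isolated vertex and set $f(v)=\alpha_m$, $g(v)=\alpha_m+\varepsilon$ with $\varepsilon>0$; any other vertices (e.g.\ ones pinned at $\alpha_0$ and $\alpha_N$) get $f=g$. Then $v$ lies in $I_0=[\alpha_0,\alpha_m]$ for $f$ but not for $g$, so $\|\widehat\beta_0(G,f)-\widehat\beta_0(G,g)\|_1\ge 1$. On the other hand, $d_B(M_0^f,M_0^g)\le\|f-g\|_\infty=\varepsilon$, either by the paper's interlevel stability lemma or directly, since the summands $\{(a,b):a\le f(v)\le b\}$ and $\{(a,b):a\le g(v)\le b\}$ are at interleaving distance $\varepsilon$. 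Letting $\varepsilon\to0^+$ rules out every finite $C$. A chordless $4$-cycle with all vertices at $\alpha_m$, one of them moved to $\alpha_m+\varepsilon$, does the same for $k=1$.

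The paper's proof runs through your Steps~1--2 (its interval-rank lemma) and then simply asserts a finite $C_0(\widehat G,I_t)$ with $|\dim M_k^f(I_t)-\dim M_k^g(I_t)|\le C_0\,d_B(M_k^f,M_k^g)$. But that lemma bounds the difference by a \emph{count} of bars with an endpoint within $\delta$ of $\{\alpha_{ts},\alpha_{ts+m}\}$, which is $O(1)$, not $O(\delta)$. This is exactly the indicator-to-linear step you declined to take, and it is where the paper's argument breaks.

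Your repair, assuming both $f$ and $g$ are grid-valued, gives a true if coarse statement. However, its key step (that $d_B=0$ forces equal tokens and $d_B>0$ forces $d_B\ge\delta$) is only asserted. It follows from an argument that needs no barcodes:
\begin{itemize}
\item For grid-valued $h$, $\widehat G^{\,h}_{[a,b]}$ depends only on which $\alpha_i$ lie in $[a,b]$. So $M_k^h$ is constant on each cell $a\in(\alpha_{p-1},\alpha_p]$, $b\in[\alpha_q,\alpha_{q+1})$, and $p_t=(\alpha_{ts},\alpha_{ts+m})$ is the smallest window in its cell.
\item Given an $\eta$-interleaving with $\eta<\delta$, the windows $p_t$, $p_t+\eta$, $p_t+2\eta$ (shifting $(a,b)\mapsto(a-\eta,b+\eta)$) share a cell. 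Hence the composite $M_k^f(p_t)\to M_k^g(p_t+\eta)\to M_k^f(p_t+2\eta)$ is an isomorphism, so the first map is injective.
\item Since $M_k^g$ is constant on that cell too, this gives $\widehat\beta_k^f(t)\le\widehat\beta_k^g(t)$, and symmetrically the reverse inequality.
\item As $d_I\le d_B$, $d_B<\delta$ forces identical tokens. Otherwise the trivial bound $Tn_k$ gives $\|\widehat\beta_k(G,f)-\widehat\beta_k(G,g)\|_1\le (Tn_k/\delta)\,d_B$.
\end{itemize}

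This argument also frees you from Step~1's appeal to the level-set structure theorem, which does not apply verbatim here. $\widehat G^{\,h}_{[a,b]}$ is a full subcomplex, not an interlevel set of a PL extension: for an edge with endpoint values $0$ and $3$, the window $[1,2]$ gives the empty complex rather than a segment.

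Finally, your $f$-dependent alternative fails as phrased. Taking the minimal \emph{positive} distance from the critical values of $\widehat f$ to the grid ignores exactly the on-grid values that create the counterexample. Any such version must exclude those values, and then $C$ depends on $f$, contrary to the statement.
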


\begin{corollary} \label{cor:stability}
For upper–star filtrations on a fixed complex, interlevel modules satisfy
\( d_B(M_k^f,M_k^g)\le \|f-g\|_\infty \). Hence
\(
\|\widehat\beta_k(G,f)-\widehat\beta_k(G,g)\|_{1}
\le C\,\|f-g\|_\infty.
\)
\end{corollary}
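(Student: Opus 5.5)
The statement to prove is Corollary~\ref{cor:stability}. The plan has two parts: first establish the bottleneck stability $d_B(M_k^f,M_k^g)\le\|f-g\|_\infty$ for interlevel modules on the fixed complex $\widehat G$, then substitute it into Theorem~\ref{thm:stability}.

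\textbf{Step 1: the upper--star extension is $1$-Lipschitz.} For any simplex $\sigma\in\widehat G$,
\[
|\widehat f(\sigma)-\widehat g(\sigma)| = \Big|\max_{v\in\sigma} f(v)-\max_{v\in\sigma} g(v)\Big| \le \max_{v\in\sigma}|f(v)-g(v)| .
\]
Taking the supremum over $\sigma$ gives $\|\widehat f-\widehat g\|_\infty\le\|f-g\|_\infty$. Set $\delta=\|f-g\|_\infty$.

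\textbf{Step 2: an interleaving of interlevel sets.} For every interval $I=[a,b]$, write $I^\delta=[a-\delta,b+\delta]$. Step 1 gives the inclusions
\[
(\widehat G)^f_{I}\subseteq(\widehat G)^g_{I^\delta}\subseteq(\widehat G)^f_{I^{2\delta}},
\]
and the same chain with $f$ and $g$ exchanged. One point needs care here: the upper--star interlevel set $\{\sigma: \widehat h(\sigma)\in I\}$ need not be a subcomplex. To avoid this, I would work with the induced-subgraph convention the paper actually uses, namely the clique complex on $\{v: h(v)\in I\}$. With that convention the inclusions hold vertexwise, so they are simplicial inclusions. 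Applying $H_k$ and using that all maps are induced by inclusions, the diagrams commute. This yields a $\delta$-interleaving of $M_k^f$ and $M_k^g$, viewed as modules indexed over intervals, equivalently over the strip or upper half-plane poset with the shift $I\mapsto I^\delta$.

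\textbf{Step 3: from interleaving to bottleneck distance.} Next I would invoke the isometry theorem for interlevel persistence. Since $\widehat G$ is finite, the modules are finitely presented and decompose into block or interval summands, as in the Carlsson--de~Silva--Morozov level-set zigzag framework and Botnan--Lesnick's block decomposition. The isometry (algebraic stability) theorem for such block-decomposable modules then gives $d_B\le d_I\le\delta$, which is the first claim.

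This is the main obstacle. The cleanest published statement bounds $d_B$ by the interleaving distance, but possibly with a constant (Botnan--Lesnick obtain $d_B\le\tfrac52 d_I$ in general, and $1$ for block modules in the appropriate setting). If only a constant $c$ is available, I would absorb it into $C$. Alternatively, I would cite the Bendich--Edelsbrunner--Morozov / Cohen-Steiner--Edelsbrunner--Harer extended-persistence stability theorem: the level-set zigzag diagram is equivalent to the extended persistence diagram of $\widehat h$, and that diagram is $\|\cdot\|_\infty$-stable with constant $1$.

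\textbf{Step 4: conclusion.} Combining with Theorem~\ref{thm:stability} gives
\[
\|\widehat\beta_k(G,f)-\widehat\beta_k(G,g)\|_1\le C\,d_B(M_k^f,M_k^g)\le C\,\|f-g\|_\infty ,
\]
with $C$ depending only on $\widehat G$, the grid $\{\alpha_i\}$, $m$ and $s$, as claimed.
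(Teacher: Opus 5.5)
Your Step~4 is the paper's entire proof. The paper quotes $d_B(M_k^f,M_k^g)\le\|f-g\|_\infty$ as Lemma~\ref{lem:interlevel-stability}, citing Botnan--Lesnick, and composes it with Theorem~\ref{thm:stability}. Your Steps~1--3 try to prove that lemma. The $\delta$-interleaving in Step~2 is correct for the full-subcomplex convention you adopt, which is also the paper's definition of $\widehat G^{\,h}_{[a,b]}$.

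Step~3 fails, because these modules are in general neither block decomposable nor the level-set zigzag modules of Carlsson--de~Silva--Morozov. Take $G$ a single edge $uw$ with $h(u)=0$, $h(w)=2$. Then $M_0^h(a,b)\cong k$ exactly when $[a,b]$ contains $0$ or $2$, and every nonzero structure map is an isomorphism. So $M_0^h$ is an indecomposable interval module whose support contains $[0,0]$ and $[2,2]$ but not $[1,1]$. Every block meets the diagonal $\{[x,x]\}$ in an interval of $x$'s, so this support is not a block, and by Krull--Schmidt $M_0^h$ is not block decomposable. What breaks is the Mayer--Vietoris (middle-exactness) property behind block decomposition: $\widehat G^{\,h}_{[0,1]}\cup\widehat G^{\,h}_{[1,2]}$ is two points, not the edge $\widehat G^{\,h}_{[0,2]}$.

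For the same reason, the PL interlevel set $|h|^{-1}[\tfrac12,\tfrac32]$ is a segment while the full subcomplex is empty. Extended-persistence stability therefore controls a different module, whose pointwise dimensions are not the Topo-Scan tokens. Finiteness of $\widehat G$ gives finite presentation, not an interval decomposition or an isometry theorem. So your interleaving bound does not become a bottleneck bound via the tools you name. Absorbing a constant $c>1$ into $C$ would in any case give only the second inequality, not the first as stated. The paper's citation has the same applicability problem, so you lose nothing relative to it, but the first claim remains unproved.

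More seriously, the second inequality is false when $C=C(\widehat G,\{\alpha_i\},m,s)$, so no argument ending in Step~4 can be correct. Let $G$ be a single vertex $v$, take $\alpha_i=i$ for $i=0,1,2$ and $m=s=1$ (windows $[0,1]$ and $[1,2]$), and set $f(v)=1$, $g(v)=1+\epsilon$. Then $\widehat\beta_0(G,f)=(1,1)$ and $\widehat\beta_0(G,g)=(0,1)$. The left side equals $1$ for every $\epsilon\in(0,1)$, while the right side is $C\epsilon$.

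In this example $d_B(M_0^f,M_0^g)\le\epsilon$ does hold, so the culprit is Theorem~\ref{thm:stability}, not the lemma. Its proof replaces the bar count $\mathcal{B}_M(I_t,\delta)+\mathcal{B}_N(I_t,\delta)$ from Lemma~\ref{lem:interval-rank-lipschitz} by $C_0\,d_B$. That count is bounded but not $O(\delta)$: it stays positive as $\delta\to0$ whenever a bar endpoint sits on a window endpoint. The Topo-Scan grid makes this routine, since it contains vertex values such as $\alpha_0=\min f$.

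Integer-valued tokens jump whenever a vertex value crosses a window endpoint. Any correct version must therefore do one of two things. It can let $C$ depend on $f$, e.g.\ through $\min_{v,i}|f(v)-\alpha_i|$; if $\|f-g\|_\infty$ is below this, the tokens of $g$ equal those of $f$. Or it can replace the Lipschitz bound by a count, weighted by local complexity, of vertices whose values lie within $\|f-g\|_\infty$ of a window endpoint.
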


\paragraph{Outline.}
Each token counts classes surviving exactly over $I_t$; under a $\delta$ bottleneck matching, only classes within $\delta$ of the interval boundary can change their contribution, so per-window changes are $O(\delta)$ and summing over windows yields the discrete $\ell_1$ bound with a constant depending on the window schedule and the (finite) bar complexity of $\widehat G$. Full details and references are in Appendix~\ref{sec:theorems}.

\rev{Beyond stability, the Topo-Scan sequences $\widehat\beta_k(G,h)$ are closely related to classical PH
invariants: they can be viewed as a discrete sampling of the rank invariant / Betti curve of the
interlevel module $M_k^h$ along our window schedule. Thus, Topo-Scan provides a coarse but structured,
Transformer-ready discretization of the same homological information underlying barcodes and stable-rank
summaries; see Appendix~\Cref{rmk:PH-invariants} for further discussion.}

%% file: sections_CR/3-experiments.tex
\subsection{Experimental Setup} \label{sec:setup}

\noindent \textbf{Datasets.} \quad We report the \method performance in two graph learning tasks: \textbf{graph classification} and \textbf{molecular property prediction (MPP)}.

\noindent \textit{Graph Classification Datasets.}\quad  We use nine  graph classification benchmark datasets: (i) molecular graphs from BZR, MUTAG and COX2~\citep{kriege2012subgraph}; (ii) biological graphs PROTEINS~\citep{borgwardt2005protein}; 
\input{tables/datasetStatsTable}
and (iii) social graphs, including IMDB-Binary, IMDB-Multi, REDDIT-Binary, and REDDIT-Multi-5K~\citep{yanardag2015deep}. 
We also include a large-scale dataset, OGBG-MOLHIV, from Open Graph Benchmark~\citep{hu2020open}. Dataset statistics are provided in~\Cref{tab:dataStats}.

\noindent \textit{MPP Datasets.} \quad
For molecular property prediction (MPP), we employ seven datasets from MoleculeNet~\citep{wu2018moleculenet}: BBBP (blood-brain barrier penetration), Tox21, ToxCast, ClinTox (toxicity prediction), SIDER (adverse drug reactions), HIV (replication inhibition), and BACE ($\beta$-secretase 1 inhibitors).  Dataset statistics are provided in~\Cref{tab:MPP-Sota} (top rows).

\noindent \textbf{Model Setup.} \quad We use \textit{Topo-Scan} to generate topological signature sequences, which are fed to Transformer classifiers. Each filtration (20 thresholds, width 2) yields four sequences of length 19 (Betti-0, Betti-1, node count, edge count), giving $76$ features per filtration. For graph classification, we use Ollivier–Ricci curvature and Heat Kernel Signature; and for molecular property prediction \rev {including the OGBG-MOLHIV dataset,} we use atomic weight and Ollivier–Ricci curvature. Independent Transformers process each filtration, and their outputs are fused by attention before a final linear layer.

For MPP, we use \method with the standard molecular fingerprints, processed by a two-layer MLP and combined with topological features via attention, yielding \textsc{TopoFormer}$^*$. We report 10-fold CV accuracy on graph classification, scaffold-split AUCs over three runs for MPP~\citep{fang2023knowledge}, and use the standard split for OGBG-MOLHIV.

\noindent \textbf{Hyperparameters.} \quad For model optimization, we employed the Adam optimizer with a learning rate of 0.001. \rev{We also use standard regularization techniques—dropout (0.5), weight decay (1e-4), and batch normalization—commonly employed in transformer training.} The transformer model architecture was designed with a hidden dimension of 32. Hyperparameters are given in App.~\ref{sec:hyper}.

\noindent \textbf{Computational Complexity.} \quad
Classical PH requires global boundary–matrix reductions with cubic worst-case cost and poor parallelism~\citep{otter2017roadmap}. 
\method \emph{skips persistence diagrams entirely}: instead of global reductions, it computes $\beta_0$ and $\beta_1$ \emph{per slice} on the clique-complex 2-skeleton. 
$\beta_0$ uses union–find on the 1-skeleton, while $\beta_1$ is derived from sparse edge–triangle operators after triangle enumeration (no cycle-rank identity due to clique complexes). 
This yields $\mathcal{O}(|V_t|+|E_t|+T_t)$ per slice, aggregated as 
$\mathcal{O}(L\!\sum_t(|V_t|+|E_t|+T_t))$ across $k$ slices and $L$ filtrations. 
Since slices are independent, Betti computations are fully parallelizable. 
By bypassing PD computation and vectorization, \method achieves multi-fold runtime and memory gains while retaining task-relevant topological features (\Cref{sec:runtime}). In~\Cref{sec:PHvsTopoTR-app}, we further show that \method consistently outperforms classical PH across multiple filtration functions and vectorization schemes in the graph classification task.

\begin{table*}[t]
\caption{\footnotesize \textbf{Graph Classification.} Accuracy on eight benchmark datasets using 10-fold CV. Baseline results are taken from the respective papers using the same setting. We mark the \textcolor{blue}{1st (blue)}, \textcolor{purple}{2nd (purple)}, and \textcolor{teal}{3rd (green)} per column. The last two columns report the average deviation (AvD) from the best-performing model and the average rank (AvR) across all datasets. \label{tab:graph-results}}
\centering
\resizebox{\linewidth}{!}{
\setlength\tabcolsep{4pt}
\begin{tabular}{lcccccccc|cc}
\toprule
\textbf{Model} & \textbf{BZR} & \textbf{COX2} & \textbf{MUTAG} & \textbf{PROTEINS} & \textbf{IMDB-B} & \textbf{IMDB-M} & \textbf{REDDIT-B} & \textbf{REDDIT-5K} & \textbf{AvD}$\downarrow$ & \textbf{AvR}$\downarrow$ \\
\midrule
6 GNNs~\citep{errica2020fair}  & -- & --  & 80.42{\scriptsize $\pm$2.07} & 75.80{\scriptsize $\pm$3.70}&71.20{\scriptsize $\pm$3.90} & 49.10{\scriptsize $\pm$3.50} & 89.90{\scriptsize $\pm$1.90} & 56.10{\scriptsize $\pm$1.60}& 6.0& 8.5\\
PersLay~\citep{carriere2020perslay} & -- & 80.90{\scriptsize $\pm$ NA} & 89.80{\scriptsize $\pm$ NA} & 74.80{\scriptsize $\pm$ NA} & 71.20{\scriptsize $\pm$ NA} & 48.80{\scriptsize $\pm$ NA} & --& 55.60{\scriptsize $\pm$ NA} & 5.2& 8.7\\
DMP~\citep{bodnar2021deep} & -- & --  & 84.00{\scriptsize $\pm$8.60} & 75.30{\scriptsize $\pm$3.30} & 73.80{\scriptsize $\pm$4.50} & 50.90{\scriptsize $\pm$2.50} & 86.20{\scriptsize $\pm$6.80} & 51.90{\scriptsize $\pm$2.10} & 6.1 & 8.3\\
FC-V~\citep{o2021filtration} & 85.61{\scriptsize $\pm$0.59} & 81.01{\scriptsize $\pm$0.88} & 87.31{\scriptsize $\pm$0.66} & 74.54{\scriptsize $\pm$0.48} & 73.84{\scriptsize $\pm$0.36} & 46.80{\scriptsize $\pm$0.37} & 89.41{\scriptsize $\pm$0.24} & 52.36{\scriptsize $\pm$0.37} & 5.7 & 9.2\\
SubMix~\citep{yoo2022model}	& 86.34\scriptsize{$\pm$2.00} & \third{84.68\scriptsize{$\pm$3.70}} & 80.99\scriptsize{$\pm$0.60} & 67.80\scriptsize{$\pm$2.00} & 70.30\scriptsize{$\pm$1.40} & 46.47\scriptsize{$\pm$2.50} & -- & -- & 8.4&11.5\\
G-Mix~\citep{han2022g}	& 84.15\scriptsize{$\pm$2.30} & 83.83\scriptsize{$\pm$2.10} & 81.96\scriptsize{$\pm$0.60} & 66.28\scriptsize{$\pm$1.10} & 69.40\scriptsize{$\pm$1.10} & 46.40\scriptsize{$\pm$2.70} & -- & --	& 9.1&12.8\\
RGCL~\citep{li2022let} & 84.54{\scriptsize $\pm$1.67} & 79.31{\scriptsize $\pm$0.68} & 87.66{\scriptsize $\pm$1.01} & 75.03{\scriptsize $\pm$0.43} & 71.85{\scriptsize $\pm$0.84} & 49.31{\scriptsize $\pm$0.42} & 90.34{\scriptsize $\pm$0.58} & 56.38{\scriptsize $\pm$0.40}  & 5.2&8.0\\
AutoGCL~\citep{yin2022autogcl} & 86.27{\scriptsize $\pm$0.71} &  79.31{\scriptsize $\pm$0.70} & 88.64{\scriptsize $\pm$1.08}  & 75.80{\scriptsize $\pm$0.36} & 72.32{\scriptsize $\pm$0.93} & 50.60{\scriptsize $\pm$0.80} & 88.58{\scriptsize $\pm$1.49} & \third{56.75{\scriptsize $\pm$0.18}} & 4.7 &7.0\\
WWLS~\citep{fang2023wasserstein} & 88.02{\scriptsize $\pm$0.61} & 81.58{\scriptsize $\pm$0.91} & 88.30{\scriptsize $\pm$1.23} & 75.35{\scriptsize $\pm$0.74} & \third{75.08{\scriptsize $\pm$0.31}}& 51.61{\scriptsize $\pm$0.62} & -- & -- & 4.5&5.2\\
PGOT~\citep{qian2024reimagining} & 87.32{\scriptsize $\pm$3.90} & 82.98{\scriptsize $\pm$5.21} & \second{92.63{\scriptsize $\pm$2.58}} & 73.21{\scriptsize $\pm$2.59} & 62.90{\scriptsize $\pm$3.05} & 51.33{\scriptsize $\pm$1.76} & -- & -- & 6.1&7.5\\
EMP~\citep{chen2024emp} & -- & -- & 88.79{\scriptsize $\pm$0.63} &72.78{\scriptsize $\pm$0.54} &74.44{\scriptsize $\pm$0.45} &48.01{\scriptsize $\pm$0.42} & \second{91.03{\scriptsize $\pm$0.22}} & 54.41{\scriptsize $\pm$0.32}&4.8&7.5\\
EPIC~\citep{heo2023epic}	& \third{88.78\scriptsize{$\pm$2.30}} & \best{85.53\scriptsize{$\pm$1.60}} & 82.44\scriptsize{$\pm$0.70} & 69.06\scriptsize{$\pm$1.00} & 71.70\scriptsize{$\pm$1.00} & 47.93\scriptsize{$\pm$1.30} & -- & -- & 6.9&9.0\\
MP-HSM~\citep{loiseaux2024stable} & -- & 77.10{\scriptsize $\pm$3.00} & 85.60{\scriptsize $\pm$5.30} & 74.60{\scriptsize $\pm$2.10} & 74.80{\scriptsize $\pm$2.50} & 47.90{\scriptsize $\pm$3.20} & -- & -- & 6.9 &10.1\\
TopoGCL~\citep{chen2024topogcl} & 87.17{\scriptsize $\pm$0.83} & 81.45{\scriptsize $\pm$0.55} & 90.09{\scriptsize $\pm$0.93} & \second{77.30{\scriptsize $\pm$0.89}} & 74.67{\scriptsize $\pm$0.32} & \second{52.81{\scriptsize $\pm$0.31}} & \third{90.40{\scriptsize $\pm$0.53}} & -- & \third{3.5}&\third{4.3}\\
DASP~\citep{ye2025beyond}& \second{89.40{\scriptsize $\pm$3.10}}& \second{84.80{\scriptsize $\pm$4.60}} & \third{91.90{\scriptsize $\pm$8.60}}& \third{77.20{\scriptsize $\pm$3.10}}& \best{81.40{\scriptsize $\pm$3.60}}& 51.20{\scriptsize $\pm$2.20}& -- & \second{57.60{\scriptsize $\pm$1.60}}&\second{1.6} &\second{2.8} \\
\midrule
\method & \best{92.36{\scriptsize $\pm$4.11}} & 83.93{\scriptsize $\pm$4.03} & \best{94.68{\scriptsize $\pm$4.30}} & \best{77.64{\scriptsize $\pm$3.64}} & \second{78.90{\scriptsize $\pm$3.31}} & \best{55.40{\scriptsize $\pm$4.78}} & \best{91.50{\scriptsize $\pm$1.89}} & \best{57.99{\scriptsize $\pm$1.94}} & \best{0.5} & \best{1.5}\\
\bottomrule
\end{tabular}}
\vspace{-.2in}
\end{table*}

\noindent \textbf{Implementation and Runtime.} \quad We implemented our approach in Python and conducted experiments on a 12th Gen Intel Core i7-1270P vPro processor (E-cores up to 3.50 GHz, P-cores up to 4.80 GHz) with 32GB LPDDR5-6400MHz RAM.  \rev{Topo-Scan feature extraction took 269.38 seconds for OGBG-MOLHIV/HIV and 29.51 seconds for REDDIT-5K; other datasets were faster. The remaining model runtime was negligible. More timeruns and a comparison with PH can be found at ~\Cref{sec:runtime}.}

\subsection{Results}

\noindent \textbf{Graph Classification Baselines.} \quad We evaluate our method against 20 state-of-the-art baselines spanning several categories. These include: \textit{GNN-based models} such as GCN, DGCNN, DiffPool, ECC, GIN, and GraphSAGE (with the best results reported by \citet{errica2020fair}); \textit{topological methods} including PersLay, DMP, FC-V, WWLS, MP-HSM, and EMP; \textit{GNNs with data augmentation} such as SubMix, G-Mix, and EPIC; \textit{contrastive learning methods} including RGCL, AutoGCL, and TopoGCL; and \textit{prototype-based methods} such as PGOT. We further include the recent graph kernel method DASP~\citep{ye2025beyond}. A complete list of baselines is provided in~\Cref{tab:graph-results}.

\noindent \textbf{Graph Classification Results.} \quad In graph classification, \method attains the \textit{best or second-best accuracy on 7 out of 8 benchmarks} (Table~\ref{tab:graph-results}). 
Aggregating across datasets, it achieves an \textit{average deviation (AvD)} of \textit{0.5} from the best model and an \textit{average rank (AvR)} of \textit{1.5}, 
demonstrating consistent top-tier performance. Notably, \method establishes new state-of-the-art on BZR, MUTAG,PROTEINS, IMDB-M, REDDIT-B, and REDDIT-5K, 
while remaining highly competitive elsewhere. It also surpasses common pooling-based methods on these datasets (see \Cref{tab:pooling}). On the large-scale \textsc{OGBG-MOLHIV} benchmark (\Cref{tab:molhiv}),
\textsc{TopoFormer}* reaches an AUC within $\sim$2 points of the strong Graphormer baseline,
underscoring both its scalability and the strength of topological signals as an inductive bias in graph learning.
\rev{For this table, we restrict baselines to peer-reviewed published methods reported in the literature, rather than including unpublished leaderboard entries in~\citep{hu2020open}.}

%On the large-scale \textit{OGBG-MOLHIV} benchmark (\Cref{tab:molhiv}), \method reaches an AUC within $\sim$2 points of the leading Graphormer, underscoring both its scalability and the strength of topological signals as an inductive bias in graph learning.

\begin{table*}[t]
\centering
\caption{\footnotesize \textbf{SOTA MPP Models.} ROC AUC comparison on molecular property prediction with scaffold splitting. We mark the \textcolor{blue}{1st (blue)}, \textcolor{purple}{2nd (purple)}, and \textcolor{teal}{3rd (green)} per column. The last two columns report the average deviation (AvD) from the best-performing model and the average rank (AvR) across all datasets. \label{tab:MPP-Sota}}
\vspace{-.1in}
\resizebox{\linewidth}{!}{
\begin{tabular}{lccccccc|cc}
\toprule
\textbf{Model}  & \textbf{BBBP} & \textbf{Tox21} & \textbf{ToxCast} & \textbf{SIDER} & \textbf{ClinTox} & \textbf{BACE} & \textbf{HIV} & \textbf{AvD$\downarrow$}& \textbf{AvR$\downarrow$}  \\
\midrule
$\#$ Molecules & 2,039 & 7,831 & 8,577 & 1,427 & 1,480 & 1,513 & 41,913 & & \\
$\#$ Task      & 1     & 12    & 617   & 27    & 2     & 1     & 1 & & \\
\midrule
N-GRAM~\citep{liu2019n}
  & \third{91.2{\scriptsize$\pm$3.0}}
  & 76.1{\scriptsize$\pm$2.7}
  & --
  & 63.2{\scriptsize$\pm$0.5}
  & 87.5{\scriptsize$\pm$2.7}
  & 79.1{\scriptsize$\pm$1.3}
  & 78.7{\scriptsize$\pm$0.4} & 8.5 & 8.4\\
PT-GNN~\citep{hu2020strategies}
  & 70.8{\scriptsize$\pm$1.5}
  & 78.7{\scriptsize$\pm$0.4}
  & 65.7{\scriptsize$\pm$0.6}
  & 62.7{\scriptsize$\pm$0.8}
  & 72.6{\scriptsize$\pm$1.5}
  & 84.5{\scriptsize$\pm$0.7}
  & 79.9{\scriptsize$\pm$0.7} &12.4&8.7\\
CMPNN~\citep{song2020communicative}
  & \second{92.7{\scriptsize$\pm$1.7}}
  & 80.3{\scriptsize$\pm$1.3}
  & \third{70.8{\scriptsize$\pm$1.3}}
  & 61.0{\scriptsize$\pm$3.6}
  & 89.8{\scriptsize$\pm$0.8}
  & 86.7{\scriptsize$\pm$0.2}
  & 78.2{\scriptsize$\pm$2.2} & \third{6.1} &6.2 \\
MGSSL~\citep{zhang2021motif}
  & 70.5{\scriptsize$\pm$1.1}
  & 74.0{\scriptsize$\pm$1.4}
  & 64.1{\scriptsize$\pm$0.7}
  & 59.2{\scriptsize$\pm$0.6}
  & 80.7{\scriptsize$\pm$2.1}
  & 79.7{\scriptsize$\pm$0.8}
  & 79.5{\scriptsize$\pm$1.1} &13.5 &11.7 \\
GEM~\citep{fang2022geometry}
  & 70.5{\scriptsize$\pm$2.0}
  & 78.1{\scriptsize$\pm$0.6}
  & 68.6{\scriptsize$\pm$0.2}
  & 63.2{\scriptsize$\pm$1.5}
  & 90.3{\scriptsize$\pm$0.7}
  & 87.9{\scriptsize$\pm$1.0}
  & \third{81.3{\scriptsize$\pm$0.3}} &8.9 &6.6 \\
GROVER~\citep{rong2020self}
  & 86.8{\scriptsize$\pm$2.2}
  & 82.0{\scriptsize$\pm$1.6}
  & 56.8{\scriptsize$\pm$3.4}
  & 61.2{\scriptsize$\pm$2.5}
  & 70.3{\scriptsize$\pm$13.7}
  & 82.8{\scriptsize$\pm$3.6}
  & 68.2{\scriptsize$\pm$1.1} &13.5 &9.9 \\
GraphMVP~\citep{liu2022pre}
  & 72.4{\scriptsize$\pm$1.6}
  & 76.5{\scriptsize$\pm$0.4}
  & 63.1{\scriptsize$\pm$0.4}
  & 63.9{\scriptsize$\pm$1.2}
  & 79.1{\scriptsize$\pm$2.8}
  & 81.2{\scriptsize$\pm$0.9}
  & 77.0{\scriptsize$\pm$1.2} & 12.7 &10.4\\
MolCLR~\citep{wang2022molecular}
  & 72.6{\scriptsize$\pm$1.3}
  & 77.2{\scriptsize$\pm$0.6}
  & 65.9{\scriptsize$\pm$2.1}
  & 61.3{\scriptsize$\pm$6.6}
  & 89.8{\scriptsize$\pm$2.7}
  & \third{88.5{\scriptsize$\pm$2.2}}
  & 77.4{\scriptsize$\pm$0.6} &9.9 &7.8\\
MolCLR-2~\citep{wang2022molecular}
  & 72.4{\scriptsize$\pm$0.7}
  & 78.4{\scriptsize$\pm$0.6}
  & 69.1{\scriptsize$\pm$1.2}
  & 59.7{\scriptsize$\pm$3.4}
  & 88.0{\scriptsize$\pm$4.0}
  & 85.0{\scriptsize$\pm$2.4}
  & 77.8{\scriptsize$\pm$5.5}& 10.2 & 8.6\\
KANO~\citep{fang2023knowledge}
  & \best{96.0{\scriptsize$\pm$1.6}}
  & \best{83.7{\scriptsize$\pm$1.3}}
  & \second{73.2{\scriptsize$\pm$1.6}}
  & \third{65.2{\scriptsize$\pm$0.8}}
  & 94.4{\scriptsize$\pm$0.3}
  & \second{93.1{\scriptsize$\pm$2.1}}
  & \best{85.1{\scriptsize$\pm$2.2}} &\best{1.6} & \best{2.0} \\
MV-Mol~\citep{luo2024learning}
  & 73.6{\scriptsize$\pm$0.2}
  & 80.3{\scriptsize$\pm$0.6}
  & 70.0{\scriptsize$\pm$0.4}
  & \second{67.3{\scriptsize$\pm$0.0}}
  & \second{95.6{\scriptsize$\pm$1.6}}
  & 88.2{\scriptsize$\pm$0.4}
  & \second{81.4{\scriptsize$\pm$0.3}} & 6.5 & \third{3.6}\\
MolFuse~\citep{zheng2024cross}
  & 74.3{\scriptsize$\pm$1.3}
  & 77.6{\scriptsize$\pm$0.4}
  & 64.1{\scriptsize$\pm$0.3}
  & \best{69.5{\scriptsize$\pm$1.0}}
  & \third{95.5{\scriptsize$\pm$3.3}}
  & 87.2{\scriptsize$\pm$1.3}
  & 78.6{\scriptsize$\pm$0.9} &7.9 & 6.2\\
MORE~\citep{son2025more}
  & 71.9{\scriptsize$\pm$0.9}
  & 75.6{\scriptsize$\pm$0.5}
  & 64.6{\scriptsize$\pm$0.6}
  & 60.9{\scriptsize$\pm$0.6}
  & 81.0{\scriptsize$\pm$0.7}
  & 82.8{\scriptsize$\pm$1.3}
  & 77.0{\scriptsize$\pm$0.7}& 12.6 & 11.1\\
\midrule
\textsc{TopoFormer}$^*$
  & 89.5{\scriptsize$\pm$1.3}
  & \second{82.7{\scriptsize$\pm$0.5}}
  & \best{75.3{\scriptsize$\pm$0.5}}
  & 63.1{\scriptsize$\pm$0.7}
  & \best{96.5{\scriptsize$\pm$0.6}}
  & \best{95.9{\scriptsize$\pm$0.3}}
  & 81.2{\scriptsize$\pm$0.8} &\second{2.5} & \second{2.8} \\
\bottomrule
\end{tabular}}
\vspace{-.15in}
\end{table*}

\begin{wraptable}{r}{2.5in}
\vspace{-.2in}
  \centering
  \caption{\footnotesize ROC AUC results for OGBG-MOLHIV dataset. \label{tab:molhiv}}
  \resizebox{\linewidth}{!}{
  \begin{tabular}{lc}
    \toprule
    \textbf{Model} & \textbf{ROC AUC} \\
    \midrule
    GIN-VN~\citep{xu2018powerful}            & 77.80{\scriptsize $\pm$1.82} \\
    HGK-WL~\citep{togninalli2019wasserstein} & 79.05{\scriptsize $\pm$1.30} \\
    WWL~\citep{borgwardt2020graph}                    & 75.58{\scriptsize $\pm$1.40} \\
        PNA~\citep{corso2020principal}               & 79.05{\scriptsize $\pm$1.32} \\
        DGN~\citep{beaini2021directional}              & 79.70{\scriptsize $\pm$0.97} \\
        GraphSNN~\citep{wijesinghe2021new} & 79.72{\scriptsize $\pm$1.83}\\
        GCN-GNorm~\citep{cai2021graphnorm}     & 78.83{\scriptsize $\pm$1.00} \\
        Graphormer~\citep{ying2021transformers}   & \textcolor{blue}{\bf 80.51{\scriptsize $\pm$0.53}} \\
        Cy2C-GCN~\citep{choi2022cycle} & 78.02{\scriptsize $\pm$0.60} \\
        GAWL~\citep{nikolentzos2023graph} &78.34{\scriptsize $\pm$0.39}\\
        LLM-GIN~\citep{zhong2024benchmarking} & 79.22{\scriptsize $\pm$NA}\\
        GMoE-GIN~\citep{wang2024graph}          & 76.90{\scriptsize $\pm$0.90} \\
        TopER~\citep{tola2024toper} & \textcolor{blue}{\underline{80.21{\scriptsize $\pm$0.15}}}\\
        \midrule
        \textsc{TopoFormer}$^*$ & 78.19 {\scriptsize $\pm$0.19}\\
    \bottomrule
  \end{tabular}}
  \vspace{-.2in}
\end{wraptable}
 \noindent \textbf{MPP Baselines.} \quad
We compare against strong supervised, self-supervised, and contrastive methods for molecular property prediction (MPP).
\emph{Supervised:} CMPNN (message passing on molecular graphs).
\emph{Predictive self-supervision:} N-GRAM, PT-GNN, GROVER, MGSSL, GEM.
\emph{Contrastive/augmentation and 3D:} GraphMVP (with 3D), MolCLR, MolCLR-2.
\emph{Knowledge-aware / prompts:} KANO.

\emph{Recent multi-view/fusion models:} {MV-Mol} (multi-view molecular representations), {MORE} (modality-aware molecular representation learning), and {MolFuse} (fusion of heterogeneous molecular signals). See \Cref{tab:MPP-Sota} for full references.

\noindent \textbf{MPP Results.} \quad 
On molecular property prediction, \method shows strong adaptability when paired with Extended Connectivity Fingerprints. 
Against state-of-the-art supervised, contrastive, and fusion baselines, \textsc{TopoFormer}$^*$ achieves the \textit{best ROC AUC} on \textit{ToxCast}, \textit{ClinTox}, and \textit{BACE}, and is the \textit{runner-up on Tox21} (Table~\ref{tab:MPP-Sota}). 
It remains competitive on HIV, trailing the leader by only a small margin. 
Aggregating across all seven benchmarks, \textsc{TopoFormer} attains the \textit{second-lowest average deviation} from the column best (\textit{AvD} $=2.5$) and the  \textit{second-lowest average rank} (\textit{AvR} $=2.8$), confirming consistent top-tier performance alongside recent SOTA models such as KANO, MV-Mol, and MolFuse. 
We also benchmarked against hybrid classical (HC) models (\Cref{sec:MPP-baselines}), where \textsc{TopoFormer} achieves the best result on four out of seven datasets and highly competitive results on others (\Cref{tab:MPP-main}). 
These findings highlight that transforming topology into compact, attention-ready tokens yields a robust and adaptable molecular predictor.

We further report \emph{Hybrid Classical} baselines combining fingerprints, SMILES, and graph features with standard learners in~\Cref{tab:MPP-main}.
See \Cref{sec:MPP-baselines} for details of these models.

\subsection{Ablation Studies}

We conduct \textbf{four ablation studies, as follows.}

 \noindent\textbf{\method vs. PH}~(\Cref{tab:PHcomparison}): We compare \method with two persistent homology models using the same filtration functions and thresholds. \textit{PH-MLP} uses sublevel filtrations with Betti vectorization followed by an MLP, while \textit{PH-TR} replaces the MLP with a Transformer, treating Betti vectors as sequences. \method instead uses our proposed \textit{Topo-Scan} to directly extract topological sequences. PH-TR outperforms PH-MLP, showing that sequential encodings preserve richer information than static features. \method further improves on PH-TR, indicating that Topo-Scan captures more expressive structure than standard PH filtrations.

\begin{table*}[t]
\caption{\footnotesize \textbf{\method vs. PH:} \quad Accuracy results for three topological models using degree centrality, Ollivier-Ricci  and HKS filtrations. The PH-MLP model utilizes Betti vectors derived from regular sublevel filtrations combined with an MLP, while PH-TR applies transformers to the same vectors. The \method uses Betti sequences generated via the Topo-Scan on the same filtration function and applies transformers.
\label{tab:PHcomparison}}
\vspace{-.2cm}
\centering
  \resizebox{\linewidth}{!}{
\begin{tabular}{llccccccc}
\toprule
\textbf{Filtration} & \textbf{Model} & \textbf{BZR} & \textbf{COX2} & \textbf{MUTAG} & \textbf{PROTEINS} & \textbf{IMDB-B} & \textbf{IMDB-M} & \textbf{REDDIT-B} \\
\midrule
& PH-MLP & 82.71{\scriptsize $\pm$6.51} & 76.44{\scriptsize $\pm$5.39} & 84.06{\scriptsize $\pm$4.65} & 68.37{\scriptsize $\pm$3.97} & 65.70{\scriptsize $\pm$4.03} & 45.07{\scriptsize $\pm$2.59} & 89.50{\scriptsize $\pm$2.87} \\
\textbf{Degree} & PH-TR & 86.43{\scriptsize $\pm$4.33} & 78.15{\scriptsize $\pm$5.19} & 86.11{\scriptsize $\pm$5.23} & \textbf{77.54{\scriptsize $\pm$2.64}} & \textbf{75.00{\scriptsize $\pm$2.11}} & \textbf{50.67{\scriptsize $\pm$3.57}} & \textbf{92.30{\scriptsize $\pm$1.77}} \\
& \method & \textbf{91.10{\scriptsize $\pm$5.14}} & \textbf{80.27{\scriptsize $\pm$5.24}} & \textbf{92.54{\scriptsize $\pm$5.12}} & 77.45{\scriptsize $\pm$4.02} & 74.20{\scriptsize $\pm$5.01} & 50.33{\scriptsize $\pm$1.52} & 89.75{\scriptsize $\pm$2.18} \\
\midrule
& PH-MLP & 85.45{\scriptsize $\pm$3.36} & 78.16{\scriptsize $\pm$5.09} & 84.06{\scriptsize $\pm$5.21} & 65.50{\scriptsize $\pm$4.26} & 68.00{\scriptsize $\pm$3.55} & 44.87{\scriptsize $\pm$3.65} & 85.65{\scriptsize $\pm$2.62} \\
\textbf{O.Ricci} & PH-TR & 88.62{\scriptsize $\pm$5.40} & 78.16{\scriptsize $\pm$5.73} & 87.61{\scriptsize $\pm$5.70} & 77.27{\scriptsize $\pm$5.08} & 72.20{\scriptsize $\pm$6.24} & 48.00{\scriptsize $\pm$4.33} & 90.65{\scriptsize $\pm$1.08} \\
& \method & \textbf{90.38{\scriptsize $\pm$5.50}} & \textbf{80.72{\scriptsize $\pm$6.44}} & \textbf{92.54{\scriptsize $\pm$4.47}} & \textbf{77.90{\scriptsize $\pm$3.17}} & \textbf{74.70{\scriptsize $\pm$4.95}} & \textbf{51.53{\scriptsize $\pm$3.49}} & \textbf{91.90{\scriptsize $\pm$2.73}} \\
\midrule
& PH-MLP & 84.96{\scriptsize $\pm$4.42} & 78.19{\scriptsize $\pm$4.34} & 84.09{\scriptsize $\pm$5.72} & 70.80{\scriptsize $\pm$4.70} & 71.10{\scriptsize $\pm$5.28} & 47.93{\scriptsize $\pm$3.20} & 88.10{\scriptsize $\pm$1.67} \\
\textbf{HKS} & PH-TR & 89.60{\scriptsize $\pm$5.84} & 79.89{\scriptsize $\pm$4.66} & 94.12{\scriptsize $\pm$5.42} & 77.18{\scriptsize $\pm$3.15} & 76.80{\scriptsize $\pm$3.97} & 53.60{\scriptsize $\pm$3.31} & 87.25{\scriptsize $\pm$1.95} \\
& \method & \textbf{90.62{\scriptsize $\pm$4.91}} & \textbf{83.95{\scriptsize $\pm$2.99}} & \textbf{95.32{\scriptsize $\pm$5.58}} & \textbf{77.35{\scriptsize $\pm$2.86}} & \textbf{77.90{\scriptsize $\pm$5.72}} & \textbf{54.07{\scriptsize $\pm$2.54}} & \textbf{90.05{\scriptsize $\pm$2.41}} \\

\bottomrule
\end{tabular}}
\vspace{-.2in}
\end{table*}

\noindent\textbf{Effect of molecular fingerprints} (\Cref{tab:FP-TopoTr}): We evaluate \method and Extended-Connectivity Fingerprints (ECFPs) both independently and in combination, including integration with PubChem descriptors. While topological and fingerprint models perform moderately on their own, their combination consistently outperforms individual baselines, suggesting that topological features complement domain-specific descriptors.

\noindent\textbf{Sensitivity to width parameter}~(\Cref{tab:gap_parameters}): We analyze how the sliding window size influences the performance of Topo-Scan. See~\Cref{sec:hyper} for further details.

\begin{table*}[h!]
    \caption{{\bf Width Parameter.} Performance comparison for different window width parameters across datasets.}
    \label{tab:gap_parameters}
    \centering
    \renewcommand{\arraystretch}{1.2}
    \setlength{\tabcolsep}{5pt}
\resizebox{\linewidth}{!}{
\begin{tabular}{l c c c c c c c c}
        \toprule
        & & \textbf{BZR} & \textbf{COX2} & \textbf{MUTAG} & \textbf{PROTEINS} & \textbf{IMDB-B} & \textbf{IMDB-M} & \textbf{REDDIT-B} \\
        \midrule
        \multirow{3}{*}{Degree Centrality} 
        & $m=2$ & \textbf{89.89{\scriptsize$\pm$3.74}} & 78.36{\scriptsize$\pm$4.93} & 92.02{\scriptsize$\pm$7.24} & \textbf{77.28{\scriptsize$\pm$5.93}} & \textbf{74.20{\scriptsize$\pm$3.36}} & \textbf{51.53{\scriptsize$\pm$3.34}} & 86.60{\scriptsize$\pm$2.97} \\
        & $m=3$ & 88.64{\scriptsize$\pm$5.30} & \textbf{78.38{\scriptsize$\pm$5.04}} & 90.41{\scriptsize$\pm$5.53} & 76.92{\scriptsize$\pm$3.62} & 73.20{\scriptsize$\pm$3.39} & 51.13{\scriptsize$\pm$3.08} & \textbf{86.90{\scriptsize$\pm$2.31}} \\
        & $m=4$ & 88.86{\scriptsize$\pm$4.33} & 78.16{\scriptsize$\pm$6.07} & \textbf{92.57{\scriptsize$\pm$5.63}} & 76.91{\scriptsize$\pm$3.28} & 74.10{\scriptsize$\pm$3.93} & 49.67{\scriptsize$\pm$5.36} & 85.85{\scriptsize$\pm$2.85} \\
        \midrule
        \multirow{3}{*}{O. Ricci} 
        & $m=2$ & \textbf{90.60{\scriptsize$\pm$3.69}} & \textbf{78.60{\scriptsize$\pm$4.79}} & 89.91{\scriptsize$\pm$3.86} & 77.26{\scriptsize$\pm$4.29} & \textbf{79.10{\scriptsize$\pm$3.78}} & \textbf{54.53{\scriptsize$\pm$3.52}} & \textbf{91.40{\scriptsize$\pm$1.24}} \\
        & $m=3$ & 89.14{\scriptsize$\pm$4.70} & 78.15{\scriptsize$\pm$5.73} & \textbf{91.02{\scriptsize$\pm$6.45}} & \textbf{77.72{\scriptsize$\pm$3.36}} & 78.80{\scriptsize$\pm$3.79} & 53.73{\scriptsize$\pm$4.06} & 89.95{\scriptsize$\pm$2.24} \\
        & $m=4$ & 88.39{\scriptsize$\pm$6.44} & 78.17{\scriptsize$\pm$5.05} & 89.85{\scriptsize$\pm$6.40} & 77.35{\scriptsize$\pm$4.05} & 78.10{\scriptsize$\pm$3.14} & 53.87{\scriptsize$\pm$5.27} & 89.65{\scriptsize$\pm$2.37} \\
        \midrule
        \multirow{3}{*}{HKS} 
        & $m=2$ & 90.62{\scriptsize$\pm$4.91} & 83.95{\scriptsize$\pm$2.99} & \textbf{95.32{\scriptsize$\pm$5.58}} & 77.35{\scriptsize$\pm$2.86} & \textbf{77.90{\scriptsize$\pm$5.72}} & \textbf{54.07{\scriptsize$\pm$2.54}} & \textbf{90.05{\scriptsize$\pm$2.41}} \\
        & $m=3$ & \textbf{91.09{\scriptsize$\pm$4.28}} & \textbf{85.01{\scriptsize$\pm$4.84}} & 95.23{\scriptsize$\pm$3.89} & \textbf{78.17{\scriptsize$\pm$4.54}} & 76.90{\scriptsize$\pm$3.48} & 53.60{\scriptsize$\pm$3.30} & 88.80{\scriptsize$\pm$1.86} \\
        & $m=4$ & 90.63{\scriptsize$\pm$4.09} & 83.75{\scriptsize$\pm$5.09} & 95.12{\scriptsize$\pm$6.05} & 78.07{\scriptsize$\pm$2.84} & 77.00{\scriptsize$\pm$4.62} & 53.40{\scriptsize$\pm$3.51} & 89.00{\scriptsize$\pm$1.80} \\
        \bottomrule
    \end{tabular}}
\end{table*}

\noindent\textbf{Single vs. multiple filtration functions}~(\Cref{tab:filtration}): We test several node based and edge based functions to study how filtration choice affects performance. \rev{We observe that single-filtration TopoFormer (for example, using only HKS or only Ollivier--Ricci) already achieves strong results, while combining filtrations yields modest but consistent improvements on some datasets. This indicates that multiple filtrations are a flexible way to incorporate complementary structural signals rather than a requirement for good performance.}

\noindent \textbf{Discussion.} \quad \method delivers consistently strong performance across a broad range of graph classification benchmarks, outperforming state-of-the-art baselines and achieving the best overall accuracy on most datasets. These results demonstrate the model’s ability to extract essential structural information through topological patterns while producing \textit{fixed-size sequential representations}. Such representations are particularly well-suited for Graph Foundation Models, which require consistent and transferable embeddings across graphs of varying sizes and domains.
Table~\ref{tab:graph-results} further reveals that among the six topological baselines (PersLay, DMP, FC-V, EMP, MP-HSM, TopoGCL), \method achieves the best performance, despite being architecturally simpler and more computationally lightweight. This supports our design philosophy that robust topological summaries, when properly structured, can outperform more complex pipelines.

Crucially, \method departs from the standard GNN paradigm of first learning node embeddings followed by global pooling. While effective, this node-centric strategy treats graphs as unstructured point clouds in latent space, requiring repeated updates as embeddings evolve, often at the cost of coherence and efficiency~\citep{mesquita2020rethinking,liu2023graph}. In contrast, topological models treat graphs as structured wholes and directly encode global patterns. By bypassing intermediate node embeddings, \method provides a streamlined and principled approach for learning stable, transferable graph-level representations.

\rev{\noindent \textbf{Limitations and future work.} \quad
Our focus in this work is on a streamlined, graph-level instantiation of \method, which also suggests several natural extensions. We restrict attention to low-dimensional homology ($H_0,H_1$) on a fixed clique complex with a small set of standard filtrations (degree, curvature, HKS); incorporating richer per-slice invariants or learnable filtrations could further boost expressivity while keeping the same Topo-Scan + Transformer backbone. Likewise, we concentrate on widely used graph-classification and molecular benchmarks, leaving node-/edge-level tasks and more heterogeneous settings (e.g., temporal or citation graphs) to future work. Finally, Topo-Scan is designed as a lightweight, scan-style summary that complements rather than replaces full persistent homology, and we see developing additional theory and applications for such summaries as an interesting direction for the TDA community.}

%% file: tables/datasetStatsTable.tex
\begin{wraptable}{r}{3in}
\vspace{-.15in}
\caption{Graph classification datasets. \label{tab:dataStats}}
\centering
\resizebox{\linewidth}{!}{
\begin{tabular}{lcccc}
\toprule
{\bf Datasets}& {\bf \#Graphs}& {\bf $\left|\V\right|$}& {\bf $\left|\mathcal{E}\right|$}& {\bf Classes}\\ 
\midrule
 BZR & 405 & 35.75 & 38.36 & 2 \\
COX2 & 467 & 41.22 & 43.45  & 2 \\
MUTAG & 188 & 17.93 & 19.79 & 2 \\
PROTEINS & 1113 & 39.06 & 72.82 & 2 \\
IMDB-B & 1000 & 19.77 & 96.53 & 2 \\
IMDB-M & 1500 & 13.00 & 65.94 & 3 \\
REDDIT-B & 2000 & 429.63 & 497.75 &  2\\
REDDIT-5K & 4999 & 508.52 & 594.87 & 5 \\
OGBG-MOLHIV & 41127 & 25.5 & 27.5 & 2 \\
 \bottomrule
\end{tabular}}
  \vspace{-.2in}
\end{wraptable}

%% file: sections_CR/4-appendix.tex
\centerline{\bf \Large Appendix}

\section{\textsc{TopoFormer}$^*$: \method for MPP}

\subsection{Molecular Fingerprints} \label{sec:fingerprints}

Molecular fingerprints are widely used in computational chemistry and machine learning to represent molecular structures as fixed-length numerical vectors~\citep{cereto2015molecular}. They encode features such as atomic connectivity and substructural patterns, enabling efficient similarity search and predictive modeling. Popular methods include ECFP (Extended Connectivity Fingerprints) and PubChemFP, both extensively applied in drug discovery, virtual screening, and bioinformatics~\citep{yang2022concepts}.

\noindent \textbf{ECFP Fingerprints.} \quad  Extended Connectivity Fingerprints (ECFP) capture structural features by iteratively hashing local atomic environments up to a specified radius~\citep{rogers2010extended}. Unlike traditional hashed fingerprints, ECFP preserves substructural detail, making it effective for similarity search, QSAR modeling, and property prediction. It is invariant to atom ordering while retaining connectivity, enabling fine-grained molecular feature analysis. For a recent overview of ECFP fingerprints and their role in modern biochemical ML pipelines, see~\citep{wigh2022review}.

\subsection{\textsc{TopoFormer}$^*$ Model} \label{sec:topo-tr-fp}

For Molecular Property Prediction Task, we employ a hybrid model, \textsc{TopoFormer}$^*$, combining ECFP Fingerprints and our \method Model. This hybrid model shows the versatility of our \method model on its effective integration with complementary information (\Cref{tab:FP-TopoTr}). We give the flowchart of our hybrid model in~\Cref{fig:topoTr-FP}. In our hybrid model, we used the same experimental setup for the \method component. For the MLP component, we employed a two-layer MLP with a hidden dimension of 200, ensuring that its output dimension matches the output dimension of the \method model. The model was optimized using the Adam optimizer with a learning rate of 0.01 and a weight decay of 1e-4. Both the MLP and \method components were trained in an end-to-end manner, allowing the model to leverage both topological signatures and complementary graph information, ultimately leading to improved performance.
\begin{table}[h!]
\centering
\caption{\footnotesize Performance comparison of standalone models (\method and FP-MLP) and the hybrid model (\textsc{TopoFormer}$^*$) in random splitting (8:1:1). \label{tab:FP-TopoTr}}
\resizebox{.7\linewidth}{!}{%
\begin{tabular}{lcc|c|cc}
\toprule
& \rev{PH-TR}&\method &{FP-MLP}& \rev{PH+ECFP+TR}&\textsc{TopoFormer}$^*$\\
\midrule
BACE    & 72.41{\scriptsize$\pm$3.15} & 83.29{\scriptsize$\pm$2.14} & \underline{90.29{\scriptsize$\pm$2.67}} & 90.60{\scriptsize$\pm$2.99} & \textbf{91.60{\scriptsize$\pm$1.73}} \\
HIV     & 69.29{\scriptsize$\pm$1.65} & 75.81{\scriptsize$\pm$0.23} & \underline{83.26{\scriptsize$\pm$1.01}} & 83.97{\scriptsize$\pm$1.51} & \textbf{85.10{\scriptsize$\pm$0.49}} \\
BBBP    & 83.37{\scriptsize$\pm$3.90} & \underline{94.54{\scriptsize$\pm$1.01}} & 89.68{\scriptsize$\pm$3.46} & 93.47{\scriptsize$\pm$2.53} & \textbf{95.90{\scriptsize$\pm$0.28}} \\
ClinTox & 75.89{\scriptsize$\pm$6.60} & \underline{83.42{\scriptsize$\pm$2.33}} & 76.34{\scriptsize$\pm$6.54} & 82.04{\scriptsize$\pm$7.12} & \textbf{86.20{\scriptsize$\pm$3.83}} \\
SIDER   & 62.91{\scriptsize$\pm$3.49} & 62.10{\scriptsize$\pm$1.44} & \underline{65.30{\scriptsize$\pm$0.99}} & \textbf{66.99{\scriptsize$\pm$1.85}} & 66.80{\scriptsize$\pm$0.29} \\
Tox21   & 68.24{\scriptsize$\pm$1.60} & \underline{80.87{\scriptsize$\pm$0.19}} & 77.89{\scriptsize$\pm$1.54} & 79.03{\scriptsize$\pm$1.21} & \textbf{81.50{\scriptsize$\pm$1.85}} \\
ToxCast & 64.74{\scriptsize$\pm$2.29} & 73.37{\scriptsize$\pm$1.42} & \underline{74.69{\scriptsize$\pm$1.33}} & 75.73{\scriptsize$\pm$1.59} & \textbf{78.40{\scriptsize$\pm$1.57}} \\
\bottomrule
\end{tabular}}
\end{table}

\begin{figure*}[t]
    \centering
    \includegraphics[width=.7\linewidth]{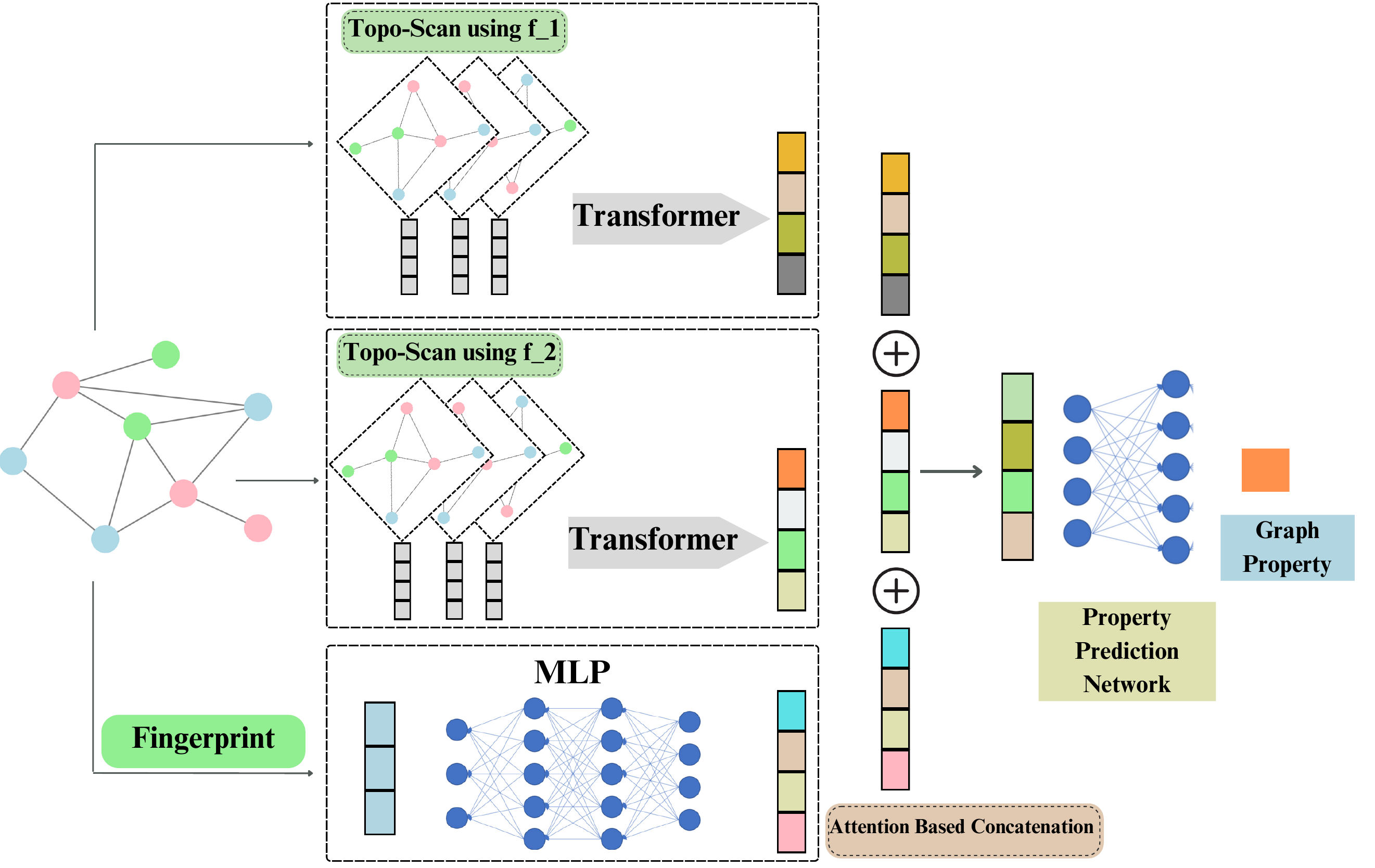} 
    \caption{\textsc{TopoFormer}$^*$: To successfully integrate complementary graph information, such as ECFP, with our \method model, we employ a MLP. The MLP output is combined with the \method model using an attention mechanism, and the combined representation is then passed through a graph prediction network to perform the final prediction task. }
    \label{fig:topoTr-FP}
\end{figure*}

\subsection{Hybrid Classical MPP Baselines} \label{sec:MPP-baselines}

We refer to the classical models combined with modern ML models as \textit{Hybrid Classical (HC) Models}. The first family of HC baseline models consists of {\em Fingerprinting models}~\citep{jiang2021could}, which use vectorized molecular fingerprints as input to traditional machine learning models, including {SVM}, {XGB}, {RF}, and {MLP}. The input fingerprints are a concatenation of 881-dimensional PubChem fingerprints (PubChemFP), 307-dimensional substructure fingerprints (SubFP), and 206-dimensional MOE 1-D and 2-D descriptors~\citep{yap2011padel}.  
The second family of baseline models comprises {\em SMILES models}, which treat SMILES strings as sequential input to 1D {CNN}~\citep{kimber2021maxsmi}, a 3-layer bidirectional {GRU}~\citep{cho2014learning}, and a pre-trained SMILES transformer ({TRSF})~\citep{honda2019smiles}. The third family is {\em GNN models} which use 2D graph-based representations of compounds, where atom and bond features are encoded using one-hot schemes and fed into {GCN}, {MPNN}, {GAT}, and {AFP} models~\citep{xiong2019pushing}.  Another baseline is the {SPN} model, using SphereNet~\citep{liu2022spherical}, which employs 3D graphs of compounds as input.

\begin{table*}[t]
\centering
\caption{\footnotesize {\bf Hybrid Classical MPP Models.} The ROC AUC results of ML models for molecular property prediction tasks with random splitting (8:1:1). The baseline results are reported from~\citep{xia2023understanding}. The best and the second best performances are given in \textcolor{blue}{bold}, and \textcolor{blue}{\underline{underlined}}, respectively.\label{tab:MPP-main}}
\resizebox{\linewidth}{!}{%
\begin{tabular}{lcccc|ccc|ccccc||c}
\toprule
& \multicolumn{4}{c}{\textbf{Fingerprinting Models}} & \multicolumn{3}{c}{\textbf{SMILES Models}} & \multicolumn{5}{c}{\textbf{GNN Models}} & \textbf{Ours}  \\
\cmidrule(lr){2-5} \cmidrule(lr){6-8} \cmidrule(lr){9-13} \cmidrule(lr){14-14} 
 \textbf{Dataset}& \textbf{SVM} & \textbf{XGB} & \textbf{RF} & \textbf{CNN} & \textbf{RNN} & \textbf{TRSF} & \textbf{MLP} & \textbf{GCN} & \textbf{MPNN} & \textbf{GAT} & \textbf{AFP} & \textbf{SPN} &  \textsc{TopoFormer}$^*$\\
\midrule
BBBP & 91.3 & \textcolor{blue}{\underline{92.6}} & 92.3 & 89.7 & 76.0 & 69.3 & 89.7 & 91.8 & 91.5 & 87.2 & 90.2 & 90.5 & \textcolor{blue}{\textbf{96.6}} \\
Tox21 & 82.0 & 83.7 & 83.1 & 81.2 & 73.7 & 76.8 & 79.9 & \textcolor{blue}{\textbf{84.6}} & 82.1 & \textcolor{blue}{\underline{84.5}} & 82.7 & 82.5 & 81.5 \\
ToxCast & 72.5 & \textcolor{blue}{\underline{78.5}} & 77.8 & 73.5 & 67.8 & 78.0 & 78.1 & 76.7 & \textcolor{blue}{\textbf{78.8}} & 77.2 & 76.8 & 77.2 & 78.4 \\
SIDER & 62.6 & 63.8 & 64.4 & 59.1 & 51.5 & \textcolor{blue}{\underline{64.1}} & 61.7 & 62.3 & 60.3 & 62.0 & 61.3 & 61.3 & \textcolor{blue}{\textbf{66.8}} \\
ClinTox & 87.9 & 91.9 & \textcolor{blue}{\underline{93.0}} & 88.8 & 68.5 & \textcolor{blue}{\textbf{96.3}} & 93.0 & 88.9 & 86.8 & 89.8 & 87.9 & 91.2 & 86.2 \\
BACE & 88.6 & \textcolor{blue}{\underline{89.6}} & 89.0 & 81.5 & 55.9 & 83.5 & 88.7 & 88.0 & 84.6 & 88.6 & 87.9 & 88.2 & \textcolor{blue}{\textbf{91.6}} \\
HIV & 81.7 & \textcolor{blue}{\underline{83.9}} & 82.0 & 82.6 & 73.3 & 74.8 & 79.1 & 83.4 & 81.4 & 81.2 & 81.8 & 81.8 & \textcolor{blue}{\textbf{85.1}} \\
\bottomrule
\end{tabular}}
\end{table*}

\section{Proofs of Stability Theorems}\label{sec:theorems}

We work on the fixed clique complex $\widehat G$ of $G=(V,E)$.
For a node function $h:V\to\R$, we use the upper–star extension $\widehat h(\sigma)=\max_{v\in\sigma}h(v)$ and the associated sublevel filtration on $\widehat G$.
Throughout, $k\in\{0,1\}$ is the homological dimension used in our tokens.

\paragraph{Preliminaries.}
For $a\le b$, define the interlevel (level-set) subcomplex
$
\widehat G^{\,h}_{[a,b]} := \{\sigma\in\widehat G:\ a \le \min_{v\in\sigma} h(v)\ \text{and}\ \max_{v\in\sigma} h(v) \le b\}.
$
The associated pointwise finite-dimensional \emph{interlevel persistence module} is the functor
$
M_k^h:(a,b)\mapsto H_k(\widehat G^{\,h}_{[a,b]}).
$
Given a shared grid $\alpha_0<\cdots<\alpha_N$, window width $m$, and stride $s$, the Topo-Scan token at window $t$ is
\[
\widehat\beta_k^h(t)\ =\ \dim\, M_k^h\big(\alpha_{ts},\alpha_{ts+m}\big),
\quad t=0,\dots,T{-}1,\quad T=\big\lfloor\tfrac{N-m}{s}\big\rfloor+1.
\]
We write $d_B(M_k^f,M_k^g)$ for the bottleneck distance between the interval decompositions (barcodes) of the interlevel modules $M_k^f$ and $M_k^g$.

\paragraph{Two stability lemmas.}

\begin{lemma}[Interlevel stability]\label{lem:interlevel-stability} \cite[Thm 1.1 \& 1.2]{botnan2018algebraic}
For $k\ge 0$, the interlevel modules of the upper–star filtrations induced by $f,g:V\to\R$ on the fixed clique complex $\widehat G$ satisfy
\[
d_B\!\big(M_k^f, M_k^g\big)\ \le\ \|f-g\|_\infty .
\]

\end{lemma}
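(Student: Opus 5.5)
The plan is to reduce the statement to an \emph{interleaving} of the two interlevel modules and then invoke the algebraic stability theorem for block-decomposable (interlevel) persistence. Put $\delta:=\|f-g\|_\infty$.

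\textbf{Step 1: extensions and subcomplexes.} First I would note that both $\sigma\mapsto\max_{v\in\sigma}h(v)$ and $\sigma\mapsto\min_{v\in\sigma}h(v)$ are $1$-Lipschitz in $h$ for the sup norm. Hence $|\widehat f(\sigma)-\widehat g(\sigma)|\le\delta$, and the same bound holds for the minima. By definition, $\widehat G^{\,h}_{[a,b]}$ is the full subcomplex of $\widehat G$ spanned by the vertices $v$ with $a\le h(v)\le b$. A vertex with $f(v)\in[a,b]$ satisfies $g(v)\in[a-\delta,b+\delta]$. This gives the inclusions
\[
\widehat G^{\,f}_{[a,b]}\ \subseteq\ \widehat G^{\,g}_{[a-\delta,b+\delta]}\ \subseteq\ \widehat G^{\,f}_{[a-2\delta,b+2\delta]},
\]
and the symmetric chain with $f$ and $g$ exchanged. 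All of these inclusions commute with the inclusions defining the structure maps of $M_k^f$ and $M_k^g$ on the poset of intervals ordered by containment.

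\textbf{Step 2: interleaving.} Next I would apply $H_k$ to the inclusions of Step 1. This yields natural maps $\varphi:M_k^f(a,b)\to M_k^g(a-\delta,b+\delta)$ and $\psi:M_k^g(a,b)\to M_k^f(a-\delta,b+\delta)$. Functoriality of homology makes both composites equal to the internal structure maps of $M_k^f$ and $M_k^g$ over the $2\delta$-shift, since all the maps are induced by inclusions of subcomplexes. Therefore $M_k^f$ and $M_k^g$ are $\delta$-interleaved, i.e.\ $d_I(M_k^f,M_k^g)\le\delta$, where $d_I$ is the interleaving distance for modules indexed by intervals $(a,b)$, equivalently by points of the half-plane $\{a\le b\}$ with the shift $(a,b)\mapsto(a-\delta,b+\delta)$.

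\textbf{Step 3: from interleaving to bottleneck.} Because $\widehat G$ is finite and the filtration takes only finitely many critical values, $M_k^h$ is pointwise finite-dimensional and constructible. By the structure theorem of Botnan--Lesnick, $M_k^h$ therefore decomposes into block modules, and $d_B$ is computed from these blocks. Their algebraic stability theorem bounds $d_B$ by a constant multiple of $d_I$. Bjerkevik's sharpening for block-decomposable modules gives constant $1$, so $d_B(M_k^f,M_k^g)\le d_I(M_k^f,M_k^g)\le\|f-g\|_\infty$.

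\textbf{Main obstacle.} I expect the last step to be the main obstacle. It is not elementary: an interleaving yields a matching of barcodes with the optimal constant only through the block-decomposition machinery and an induced-matching argument. I would cite it rather than reprove it, checking carefully two things. First, that the convention for $d_B$ (the shift of block endpoints) matches the interleaving shift used in Step 2. Second, that the closed-interval subcomplexes used here fall under the "block decomposable" hypothesis.
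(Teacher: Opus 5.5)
Your Steps 1 and 2 are correct. The inclusions $\widehat G^{\,f}_{[a,b]}\subseteq\widehat G^{\,g}_{[a-\delta,b+\delta]}$, together with the symmetric ones, give a $\delta$-interleaving, so $d_I(M_k^f,M_k^g)\le\|f-g\|_\infty$.

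The gap is in Step 3, at exactly the second point you said you would check, and that check fails. Finiteness and constructibility do not imply block decomposability. The Botnan--Lesnick structure result concerns interlevel-set modules $(a,b)\mapsto H_k(h^{-1}(a,b))$ of Morse-type functions on a space. What makes those block decomposable is Mayer--Vietoris (middle exactness), which rests on $h^{-1}(a,c)\cup h^{-1}(b,d)=h^{-1}(a,d)$ for $a<b<c<d$. For the full subcomplexes used here this identity fails: compared with $\widehat G^{\,h}_{[a,d]}$, the union $\widehat G^{\,h}_{[a,c]}\cup\widehat G^{\,h}_{[b,d]}$ omits every simplex that has one vertex valued in $[a,b)$ and another valued in $(c,d]$. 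Consequently $M_k^h$ need not be block decomposable, or even interval decomposable.

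Here is a concrete example. Let $G$ be the path with edges $v_1v_2$ and $v_2v_3$, set $h(v_1)=0$, $h(v_2)=2$, $h(v_3)=1$, and take $k=0$.
\begin{itemize}
\item $M_0^h$ is generated by the three vertex classes, which live in the one-dimensional spaces $M_0^h(0,0)$, $M_0^h(2,2)$ and $M_0^h(1,1)$.
\item The complexes $\widehat G^{\,h}_{[1,2]}$ and $\widehat G^{\,h}_{[0,2]}$ are connected. Naturality therefore forces every endomorphism to act by one and the same scalar on all three generators.
\item Hence $\mathrm{End}(M_0^h)=\mathbb{F}$, so $M_0^h$ is indecomposable.
\item However, $\widehat G^{\,h}_{[0,1]}=\{v_1,v_3\}$ has no edges, so $\dim M_0^h(0,1)=2$.
\end{itemize}
So $M_0^h$ is indecomposable but not thin, hence not interval decomposable. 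It has no barcode, and the quantity $d_B(M_0^f,M_0^g)$ in the statement is not even defined. Neither Botnan--Lesnick nor Bjerkevik's constant-$1$ theorem for block-decomposable modules can be invoked. The paper's own justification is only the citation to Botnan--Lesnick, so it carries the same mismatch.

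What your argument actually proves is the interleaving bound $d_I(M_k^f,M_k^g)\le\|f-g\|_\infty$, and for these modules that is the correct stability statement. To obtain a genuine bottleneck bound you would have to change the module. One option is the piecewise-linear extension $\bar h$ of $h$ to $|\widehat G|$, with interlevel-set module $(a,b)\mapsto H_k(\bar h^{-1}(a,b))$. That module is block decomposable (Morse type, Mayer--Vietoris holds). Your Steps 1--3 go through for it with the obvious changes, using $\|\bar f-\bar g\|_\infty=\|f-g\|_\infty$, and Bjerkevik's theorem gives constant $1$.

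Its pointwise dimensions, however, are no longer the Topo-Scan tokens. In the example above, $\bar h^{-1}(\tfrac12,\tfrac32)$ has two components, while $\widehat G^{\,h}_{[1/2,3/2]}=\{v_3\}$ has one. So the lemma must be restated, either as an interleaving bound or for a different module, before it can support the downstream theorem.
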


\begin{lemma}[Lipschitzness of interval rank]\label{lem:interval-rank-lipschitz} \citep{bauer2014induced,bakke2021stability}
Let $M,N$ be interval-decomposable, p.f.d.\ modules with $d_B(M,N)\le \delta$.
For any interval $I=[a,b]$,
\[
\big|\dim M(a,b)-\dim N(a,b)\big|
\ \le\ \mathcal{B}_M(I,\delta)+\mathcal{B}_N(I,\delta),
\]
where $\mathcal{B}_M(I,\delta)$ counts bars in $\mathrm{Bar}(M)$ whose endpoints lie within
$\delta$ of the boundary $\{a,b\}$ (and similarly for $N$).
\end{lemma}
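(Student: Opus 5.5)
The plan is to reduce everything to counting bars. Because $M$ and $N$ are interval-decomposable and pointwise finite-dimensional, I write $M\cong\bigoplus_{J\in\mathrm{Bar}(M)}k_J$ and $N\cong\bigoplus_{J'\in\mathrm{Bar}(N)}k_{J'}$. Then
\[
\dim M(a,b)=\#\{J\in\mathrm{Bar}(M): (a,b)\in J\},
\]
and similarly for $N$; I write $J\ni I$ for $(a,b)\in J$. For the interlevel modules of Lemma~\ref{lem:interlevel-stability} the summands are blocks in the strip, and "$(a,b)\in J$" means the block contains the index $(a,b)$. Nothing below depends on the block type beyond the fact that moving the endpoints of a summand by at most $\delta$ moves its support by at most $\delta$ in each coordinate.

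\textbf{Step 1 (matching with a diagonal convention).} Since $d_B(M,N)\le\delta$, there is a $\delta$-matching $\chi$ between $\mathrm{Bar}(M)$ and $\mathrm{Bar}(N)$. I plan to turn it into a total bijection as follows.
\begin{itemize}
\item Each unmatched bar of $M$ or $N$ is $2\delta$-trivial. I pair it with a degenerate (empty or zero-length) bar placed at the midpoint of its endpoints.
\item Under this pairing, every pair $(J,J')$ has corresponding endpoints within $\delta$ of each other.
\item A degenerate bar contains $(a,b)$ only in the trivial case $a=b$ with a coincident point. I treat that case separately, or note that it is excluded by $\alpha_{ts}<\alpha_{ts+m}$ in our application.
\end{itemize}

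\textbf{Step 2 (charging discrepancies).} Using the bijection, I bound
\[
\big|\dim M(a,b)-\dim N(a,b)\big|\le \#\{(J,J')\ \text{paired}: \mathbf 1[J\ni I]\neq \mathbf 1[J'\ni I]\}.
\]
I then take a discordant pair and suppose, without loss of generality, that $J\ni I$ but $J'\not\ni I$. Then some endpoint of $J'$ lies strictly on the wrong side of $a$ or of $b$, while the corresponding endpoint of $J$ lies on the correct side. Since these two endpoints differ by at most $\delta$, the endpoint of $J$ lies within $\delta$ of $\{a,b\}$, and the same holds for the endpoint of $J'$.

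\textbf{Step 3 (conclusion).} I charge each discordant pair to its genuine (non-degenerate) member. That member is a bar of $M$ or of $N$ with an endpoint within $\delta$ of the boundary. Distinct pairs are charged to distinct bars, since $\chi$ is a matching. Hence the number of discordant pairs is at most $\mathcal B_M(I,\delta)+\mathcal B_N(I,\delta)$, which is the claimed bound.

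\textbf{Main obstacle.} The delicate point is the treatment of unmatched bars. If I used the naive bound "an unmatched bar has length $\le 2\delta$", I would only get endpoints within $2\delta$ of the boundary. The midpoint/diagonal convention of Step 1 is what keeps the constant at $\delta$, so I expect to verify it carefully for blocks in the interlevel strip, including open versus closed endpoint conventions. If that verification fails, I would fall back to stating the lemma with $\mathcal B(I,2\delta)$. That change is harmless for Theorem~\ref{thm:stability}, where only finiteness of the bar count, which is absorbed into $C$, matters.
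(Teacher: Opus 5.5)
The paper gives no argument for this lemma beyond the two citations. Your decompose--match--charge scheme is the natural way to extract such a bound from a $\delta$-matching. For one-parameter modules, with $\dim M(a,b)$ read as the number of bars containing $[a,b]$ (the rank of $M(a)\to M(b)$), your proof is correct. This assumes the infimum in $d_B\le\delta$ is attained by a $\delta$-matching, which holds for the finite barcodes used in the paper. The obstacle you single out does not actually arise there. If an unmatched bar $\langle x,y\rangle$ with $y-x\le 2\delta$ contains $[a,b]$, then $(a-x)+(y-b)\le y-x\le 2\delta$, so one endpoint is already within $\delta$ of $\{a,b\}$. The midpoint device is harmless but unnecessary.

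The genuine gap is your claim that nothing depends on the block type. You need that claim for the interlevel modules, which is where the paper actually applies the lemma. There, with the interleaving-based bottleneck distance of the references behind Lemma~\ref{lem:interlevel-stability}, interleavings shift $(a,b)\mapsto(a-\delta,b+\delta)$. Under that shift, the triviality threshold of an \emph{unmatched} summand does depend on its type:
\begin{itemize}
\item Closed blocks $\{a\le y,\ b\ge x\}$ are never trivial.
\item Half-open blocks are $2\delta$-trivial iff their length is at most $2\delta$.
\item An open block $\{x<a\le b<y\}$ is $2\delta$-trivial iff $y-x\le 4\delta$. Indeed, $(m-2\delta,m+2\delta)$ with $m=(x+y)/2$ stays in the block exactly when $y-x>4\delta$.
\end{itemize}
So your midpoint partner can sit $2\delta$ from both endpoints, and the $\delta$-version of the lemma is false in this setting. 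Take $M$ the open block with $(x,y)=(0,4\delta)$, $N=0$, and $(a,b)=(2\delta-\eta,2\delta+\eta)$ with $0<\eta<\delta$. Then $d_B(M,N)\le\delta$ and the left side equals $1$. Both endpoints are more than $\delta$ from $\{a,b\}$, so the right side is $0$. Your fallback with $\mathcal{B}(I,2\delta)$ is the correct statement for strip modules, and your argument proves it, since $(a-x)+(y-b)\le 4\delta$ forces an endpoint within $2\delta$. But the fallback is needed because of open blocks, not because of the naive length bound you cite.

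Your remark that this change is harmless for Theorem~\ref{thm:stability}, because only finiteness of the bar count matters, does not hold. A count of bars with an endpoint near $\{a,b\}$ does not tend to $0$ as $\delta\to 0$ when some endpoint sits on a grid value. Hence neither version of the lemma yields a bound of the form $C\,d_B$. For example, take two isolated vertices with $f$-values $\alpha_0,\alpha_1$, let $g$ move the second one to $\alpha_1+\varepsilon$, and set $m=s=1$. The first $\beta_0$-token drops from $2$ to $1$ while $\|f-g\|_\infty=\varepsilon$.
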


\noindent \textbf{\Cref{thm:stability}.} 
\textit{With the setup above, there exists $C=C(\widehat G,\{\alpha_i\},m,s)$ such that
\[
\sum_{t=0}^{T-1}\big|\widehat\beta_k^{f}(t)-\widehat\beta_k^{g}(t)\big|
\ \le\ C\ d_B\!\big(M_k^f,M_k^g\big).
\]}

\noindent \textit{Proof of \Cref{thm:stability}.} \quad
Fix $t$ and write $I_t=[\alpha_{ts},\alpha_{ts+m}]$.
By Lemma~\ref{lem:interval-rank-lipschitz}, there exists a finite constant
$C_0(\widehat G,I_t)$ such that
\(
|\dim M_k^f(I_t) - \dim M_k^g(I_t)| \le C_0(\widehat G,I_t)\, d_B(M_k^f,M_k^g).
\)
Summing over $t$ gives
\[
\sum_{t=0}^{T-1}\big|\widehat\beta_k^{f}(t)-\widehat\beta_k^{g}(t)\big|
\ \le\ \Big(\sum_{t=0}^{T-1} C_0(\widehat G,I_t)\Big)\, d_B(M_k^f,M_k^g)
\ :=\ C\, d_B(M_k^f,M_k^g).
\]
On a fixed finite complex and fixed grid, the $C_0(\widehat G,I_t)$ are finite and can be uniformly bounded, yielding $C=T\,C_0$. \qed

\noindent \textbf{\Cref{cor:stability}.} 
\textit{For upper–star filtrations on a fixed complex, $d_B(M_k^f,M_k^g)\le \|f-g\|_\infty$ (Lemma~\ref{lem:interlevel-stability}), hence
\(
\|\widehat\beta_k(G,f)-\widehat\beta_k(G,g)\|_{1}
\le C\,\|f-g\|_\infty.
\)}

\noindent \textit{Proof of \Cref{thm:stability}.} \quad By Theorem~\ref{thm:stability}, we have
$\|\widehat\beta_k(G,f)-\widehat\beta_k(G,g)\|_{1}
\ \le\ C\, d_B(M_k^f,M_k^g).$

By Lemma~\ref{lem:interlevel-stability} (interlevel/level-set stability on the fixed clique complex),
$d_B(M_k^f,M_k^g)\le \|f-g\|_\infty$.
Combining the two inequalities yields the claim. \qed

\noindent \textbf{Connection to classical sublevel stability.}
The inequality $d_B \le \|f-g\|_\infty$ is classical for sublevel filtrations on a fixed space
\citep{cohen2007stability}. Our Lemma~\ref{lem:interlevel-stability} is the level-set (interlevel)
analogue on the fixed clique complex, following algebraic stability for zigzag/level-set modules
(e.g., \citealp{botnan2018algebraic}). We use this interlevel version to handle windowed
intervals $[a,b]$ appearing in Topo-Scan.

\noindent \textbf{Shared thresholds.}
The theorem assumes a shared grid $\{\alpha_i\}$.
If thresholds are chosen separately (e.g., per‐function quantiles), a monotone reparameterization of the filtration axis induces an additional term proportional to the grid displacement, which can be absorbed into $C$.

\begin{remark}[Relation to PH invariants and stable ranks] \label{rmk:PH-invariants}
Our stability theorem focuses on the $\ell_1$ robustness of the discrete Topo-Scan sequences
$\widehat\beta_k(G,h)$, but these sequences implicitly encode familiar PH objects.
For a fixed filtration function $h$, the map
$t \mapsto \widehat\beta_k^h(t)$ can be viewed as a sampled version of the rank invariant
$(a,b) \mapsto \mathrm{rank}\, H_k\big((\widehat G)^h_{[a,b]}\big)$ associated with the interlevel
module $M_k^h$. In this sense, Topo-Scan produces a coarse, structured discretization of the same
information that barcodes and stable vectorizations of persistence diagrams, such as persistence
landscapes, silhouettes and persistence images~\citep{chazal2014stochastic,adams2017persistence}, summarize in
continuous form. Similarly, Graph Filtration Learning~\citep{hofer2020graph} can be seen as learning
the filtration function $h$, while our work fixes $h$ and instead changes the representation from
global barcodes to local interlevel sequences. A full expressivity comparison and formal
information-loss bounds relative to complete barcodes are interesting directions for future work.
\end{remark}

\section{More on \method} \label{sec:topo-tr-arch}

\subsection{Base Model: Transformer}
Our \method model is designed for classification tasks using sequential inputs, harnessing transformers for efficient feature extraction. The architecture includes an embedding layer, a transformer encoder, and a fully connected (FC) classification head, with regularization techniques applied to mitigate overfitting.

Let $ \mathbf{x} = (x_1, x_2, \dots, x_T) \in \mathbb{R}^{N \times T \times D} $ represent the input sequence, where $ N $ is number of graphs, $ T $ is the sequence length, and $ D $ is the dimensionality of each input token. The input sequence is first passed through an embedding layer $ \mathbf{E}: \mathbb{R}^D \to \mathbb{R}^H $, where $ H $ denotes the embedding dimension. In addition, a positional encoding matrix $ \mathbf{P} \in \mathbb{R}^{1 \times T \times H} $ is added to the embeddings to encode the positional information of the sequence, resulting in a sequence of embedded vectors $ \mathbf{e}_t = \mathbf{E}(x_t) + \mathbf{P}_t $ for $ t = 1, 2, \dots, T $, where $ \mathbf{P}_t $ is the positional encoding for position $ t $.

The sequence of embeddings is then passed through a multi-layer transformer encoder, where the encoder operates on the embedded sequence $ \mathbf{E}(\mathbf{x}) + \mathbf{P} \in \mathbb{R}^{T \times B \times H} $, with $ B $ representing the batch size. The transformer encoder generates a new sequence of output representations $ \mathbf{z} = (z_1, z_2, \dots, z_T) \in \mathbb{R}^{T \times B \times H} $. After processing through the encoder, the output sequence is permuted and reshaped to a flattened vector of size $ B \times (T \cdot H) $, ensuring compatibility with subsequent fully connected layers.

The flattened representation $ \mathbf{z}_{\text{flat}} \in \mathbb{R}^{B \times (T \cdot H)} $ is then passed through a batch normalization layer, $ \mathbf{BN}(\mathbf{z}_{\text{flat}}) $, which normalizes the activations across the batch to stabilize the training process. A dropout layer $ \mathbf{D}(\cdot) $ is then applied to the normalized output to regularize the model and mitigate overfitting. The final classification output is obtained through a fully connected layer $ \mathbf{FC}: \mathbb{R}^{B \times (T \cdot H)} \to \mathbb{R}^H $.

\subsection{Dual Transformer with Multi-Layer Perceptron Classifier}

This model combines multiple sources of input data through a hybrid architecture that integrates two independent base models and a multi-layer perceptron (MLP). This model is designed to handle diverse input modalities by leveraging the strengths of both transformers and MLPs for feature extraction and classification.

Let $ \mathbf{X}_1 \in \mathbb{R}^{N \times T_1 \times D_1} $, $ \mathbf{X}_2 \in \mathbb{R}^{N \times T_2 \times D_2} $, and $ \mathbf{X}_3 \in \mathbb{R}^{N \times L} $ represent the three distinct input graph encoding, where $ T_i $ denotes the sequence length, $ D_i $ the dimensionality of the inputs for each modality and $ L $ the dimension of fingerprints. Each input is processed through its respective component: the first sequence $ \mathbf{X}_1 $ is passed through transformer $ \mathcal{T}_1 $, the second sequence $ \mathbf{X}_2 $ through transformer $ \mathcal{T}_2 $, and the third sequence $ \mathbf{X}_3 $ through an MLP $ \mathcal{M} $.

The output of the first transformer $ \mathcal{T}_1 $, denoted $ \mathbf{z}_1 \in \mathbb{R}^{T_1 \times B \times H} $, is obtained by passing $ \mathbf{X}_1 $ through the transformer encoder. Similarly, the output of the second transformer $ \mathcal{T}_2 $, denoted $ \mathbf{z}_2 \in \mathbb{R}^{T_2 \times B \times H} $, is obtained by processing $ \mathbf{X}_2 $. Finally, the output of the MLP $ \mathcal{M} $ is denoted $ \mathbf{z}_3 \in \mathbb{R}^{B \times H} $.

The outputs $ \mathbf{z}_1 $, $ \mathbf{z}_2 $, and $ \mathbf{z}_3 $ are then combined through a learnable weighted sum. Specifically, the combined feature vector $ \mathbf{z}_{\text{combined}} $ is computed as:
\[
\mathbf{z}_{\text{combined}} = \alpha \cdot \mathbf{z}_1 + \beta \cdot \mathbf{z}_2 + (1 - \alpha - \beta) \cdot \mathbf{z}_3
\]
where $ \alpha $ and $ \beta $ are learnable parameters that control the contribution of each modality to the final representation. This weighted combination allows the model to adaptively learn the most relevant contribution of each input sequence.

The combined feature vector $ \mathbf{z}_{\text{combined}} $ is then passed through a batch normalization layer $ \mathbf{BN}(\mathbf{z}_{\text{combined}}) $ to normalize the activations, improving training stability. A final fully connected layer $ \mathbf{FC} $ produces the classification output:\quad $\hat{y} = \mathbf{FC}(\mathbf{z}_{\text{combined}})$\quad 
where $ \hat{y} \in \mathbb{R}^C $ represents the predicted class probabilities, with $ C $ being the number of possible output classes.

\subsection{Runtime Analysis} \label{sec:runtime}

To assess the computational efficiency of our method, we report
the total runtime across two key stages: (i) topological signature extraction using Topo-Scan (via Degree Centrality and Ollivier-Ricci curvature), and (ii) model training using the transformer-based classifier. \Cref{tab:computation-time} presents a detailed breakdown of runtimes (in minutes) for five benchmark datasets.

As expected, Degree Centrality is extremely fast to compute and contributes negligible overhead. Ollivier-Ricci curvature, while more computationally intensive, remains tractable even for large graphs, as evidenced by reasonable runtimes on datasets such as REDDIT-5K and OGBG-MOLHIV. Transformer training times scale smoothly with dataset size and remain within practical limits.

\begin{wraptable}{r}{3in}
%\vspace{-.2in}
\centering
\caption{\footnotesize \textbf{Runtimes.}
Total runtime (in minutes) per dataset. The second and third columns report the time to compute scalar filtration values and Topo-Scan vectorizations for degree centrality and O.Ricci curvature, respectively, and the final column shows the Transformer training time.}
\label{tab:computation-time}
\resizebox{\linewidth}{!}{
\begin{tabular}{lccc}
\toprule
\textbf{Dataset} & \textbf{Degree C.} & \textbf{O. Ricci} & \textbf{Transformer} \\
\midrule
IMDB-B         & 0.51  & 5.04   & 3.05  \\
IMDB-M         & 0.49  & 7.62   & 4.57  \\
REDDIT-B       & 5.30  & 23.70  & 6.10  \\
REDDIT-5K      & 36.74 & 109.98 & 15.65 \\
OGBG-MOLHIV    & 14.70 & 339.06 & 21.67 \\
\bottomrule
\end{tabular}}
\vspace{-.2in}
\end{wraptable}
Overall, our method maintains scalability while offering strong performance, demonstrating the feasibility of integrating topological signatures into deep graph models at scale.

\rev{\textbf{TopoScan vs. PH.} We report in \Cref{tab:ph-vs-toposcan-runtime} the runtime for Topo-Scan and standard PH on four benchmark datasets with degree centrality filtration (already computed), using the same backend (pyflagser) for both pipelines. For Topo-Scan, we invoke the unweighted flagser routine, since our method only requires Betti numbers on unweighted clique complexes. For PH, we use the weighted flagser routine, which constructs a full filtration and computes persistence diagrams.}

\begin{wraptable}{r}{2.5in}

\caption{\footnotesize \rev{\textbf{Runtime for PH vs. Topo-Scan.}
Runtime (in seconds) per dataset for computing topological features using the same backend (pyflagser).}}

\label{tab:ph-vs-toposcan-runtime}
\centering
\setlength\tabcolsep{6pt}
\resizebox{\linewidth}{!}{
\begin{tabular}{lc|cc}
\toprule
\textbf{Dataset} & \textbf{Clus. Coeff.} & \textbf{Topo-Scan} & \textbf{PH} \\
\midrule
IMDB-B      & 0.947 &  9.67  & 135.48 \\
IMDB-M      & 0.969 & 17.81  & 234.30 \\
REDDIT-B    & 0.048 & 12.29  &  24.25 \\
REDDIT-5K   & 0.027 & 29.51  &  43.92 \\
\bottomrule
\end{tabular}}
\end{wraptable}
\rev{The clustering coefficient column serves as a proxy for graph density and hence clique complexity. On highly clustered graphs such as IMDB-B and IMDB-M (coefficients $\approx 0.95$–$0.97$), PH is roughly $13$–$14\times$ slower than Topo-Scan, reflecting the combinatorial explosion of cliques and the cost of global boundary-matrix reductions, whereas on the sparser REDDIT datasets the gap is smaller but still consistent (about $2\times$ on REDDIT-B and $1.5\times$ on REDDIT-5K). These results empirically confirm that Topo-Scan achieves multi-fold runtime savings over standard PH pipelines on dense graphs while remaining uniformly more efficient across all tested datasets.}

\begin{wraptable}{r}{3in}
\centering
\caption{\footnotesize Runtimes (in seconds) for PH-based baselines (PersLay, TopoGCL) and Topo-Scan on IMDB-B and REDDIT-B.}
\label{tab:perslay-topogcl}
\resizebox{\linewidth}{!}{%
\begin{tabular}{lccc}
\toprule
\textbf{Method} & \textbf{IMDB-B} & \textbf{REDDIT-B} & \textbf{Notes} \\
\midrule
PersLay   & 97.02   & 454.78  & PH-based layer on degree input \\
TopoGCL   & 435.49  & 4010.08 & Only topological component     \\
Topo-Scan & 15.16   & 290.00  & Filtration + Topo-Scan         \\
\bottomrule
\end{tabular}}
\end{wraptable}
\rev{\textbf{Comparison with other methods.} \quad We also compare the runtimes of two PH-based baselines, PersLay (with degree centrality input) and TopoGCL (using only the topology derived component), against Topo-Scan on IMDB-B and REDDIT-B (\Cref{tab:perslay-topogcl}). All times are reported in seconds. For Topo-Scan, we include both the scalar filtration computation and Topo-Scan feature extraction. On IMDB-B, Topo-Scan is about 6 times faster than PersLay and around 29 times faster than the topological part of TopoGCL; on REDDIT-B, it remains faster than PersLay and roughly 14 times faster than TopoGCL. These results further support the practical efficiency of Topo-Scan compared with PH-based pipelines.}

\subsection{Comparison with Pooling Methods}
\Cref{tab:pooling} compares \method with six representative graph pooling methods designed to adapt GNNs to graph‑level tasks. DiffPool \citep{ying2018hierarchical} learns a soft assignment matrix that hierarchically clusters nodes in a differentiable, end‑to‑end manner. Top‑KPooling with Graph U‑Nets (Top‑K) \citep{gao2019graph} ranks nodes using a learnable projection score and retains the top‑$k$ fraction to coarsen the graph. EigenPool (EigenGCN) \citep{ma2019graph} projects node features onto the leading eigenvectors of the graph Laplacian to preserve global spectral properties. SAGPool \citep{lee2019self} computes attention scores through a GNN layer, pruning low‑importance nodes and re‑wiring the remaining graph. MinCutPool \citep{bianchi2020spectral} casts pooling as a relaxed spectral clustering problem by optimizing a minimum‑cut objective to form node clusters. HaarPool \citep{wang2020haar} applies a Haar wavelet transform to graph signals and performs pooling by selecting key wavelet coefficients. Our model \method takes a different approach by integrating multiscale topological filtrations with a transformer‑based attention mechanism, enabling the pooling of substructures across scales and yielding robust higher‑order graph representations. As shown in \Cref{tab:pooling}, \method consistently outperforms all baselines, achieving the best accuracy on six out of seven datasets and ranking second on the remaining one.

\begin{table*}[h!]
\caption{{\bf Comparison with Pooling Methods.} Accuracy results of six baseline pooling methods and \method on seven graph classification benchmark datasets.}
\label{tab:pooling}
\centering
\resizebox{\linewidth}{!}{
\begin{tabular}{lccccccc}
\toprule
\textbf{Model}       & \textbf{BZR}                   & \textbf{COX2}                  & \textbf{MUTAG}                & \textbf{PROTEINS}             & \textbf{IMDB‑B}               & \textbf{IMDB‑M}               & \textbf{REDDIT‑B}            \\
\midrule
Top‑K       & $79.40${\scriptsize$\pm1.20$}      & $80.30${\scriptsize$\pm4.21$}     & $67.61${\scriptsize$\pm3.36$} & $69.60${\scriptsize$\pm3.50$}  & $73.17${\scriptsize$\pm4.84$}  & $48.80${\scriptsize$\pm3.19$}  & $79.40${\scriptsize$\pm7.40$}   \\
MinCutPool  & $82.64${\scriptsize$\pm5.05$}      & $80.07${\scriptsize$\pm3.85$}     & $79.17${\scriptsize$\pm1.64$} & $\underline{76.52}${\scriptsize$\pm2.58$}  & $70.77${\scriptsize$\pm4.89$}  & $49.00${\scriptsize$\pm2.83$}  & $\underline{87.20}${\scriptsize$\pm5.00$}   \\
DiffPool    & $83.93${\scriptsize$\pm4.41$}      & $79.66${\scriptsize$\pm2.64$}     & $79.22${\scriptsize$\pm1.02$} & $73.63${\scriptsize$\pm3.60$}  & $68.60${\scriptsize$\pm3.10$}  & $45.70${\scriptsize$\pm3.40$}  & $79.00${\scriptsize$\pm1.10$}   \\
EigenGCN    & $83.05${\scriptsize$\pm6.00$}      & $80.16${\scriptsize$\pm5.80$}     & $79.50${\scriptsize$\pm0.66$} & $74.10${\scriptsize$\pm3.10$}  & $70.40${\scriptsize$\pm3.30$}  & $47.20${\scriptsize$\pm3.00$}  & N/A                   \\
SAGPool     & $82.95${\scriptsize$\pm4.91$}      & $79.45${\scriptsize$\pm2.98$}     & $76.78${\scriptsize$\pm2.12$} & $71.86${\scriptsize$\pm0.97$}  & $\underline{74.87}${\scriptsize$\pm4.09$}  & $49.33${\scriptsize$\pm4.90$}  & $84.70${\scriptsize$\pm4.40$}   \\
HaarPool    & $\underline{83.95}${\scriptsize$\pm5.68$} & $\underline{82.61}${\scriptsize$\pm2.69$} & $\underline{90.00}${\scriptsize$\pm3.60$} & $73.23${\scriptsize$\pm2.51$}  & $73.29${\scriptsize$\pm3.40$}  & $\underline{49.98}${\scriptsize$\pm5.70$} & N/A                   \\
\midrule
\method     & $\mathbf{92.36}${\scriptsize$\pm4.11$} & $\mathbf{83.93}${\scriptsize$\pm4.03$} & $\mathbf{94.68}${\scriptsize$\pm4.30$} & $\mathbf{77.64}${\scriptsize$\pm3.64$} & $\mathbf{78.90}${\scriptsize$\pm3.31$} & $\mathbf{55.40}${\scriptsize$\pm4.78$} & $\mathbf{91.50}${\scriptsize$\pm1.89$} \\
\bottomrule
\end{tabular}}
\end{table*}

\subsection{Topo-Scan Hyperparameters} \label{sec:hyper}

In the Topo-Scan algorithm, two key hyperparameters play a crucial role: the width parameter, which controls the thickness of slices, and the filtration function, which defines the hierarchical importance of nodes or edges. To determine the optimal hyperparameter settings, we conducted extensive experiments to validate their impact on model performance.

\noindent \textbf{Width Parameter Selection.}
To determine the optimal width parameter $ m $, we conducted experiments using degree centrality and Ollivier-Ricci curvature as filtration functions for the Topo-Scanner on graph classification datasets. We evaluated $ m = 2, 3, $ and $ 4 $, extracting the corresponding Topo-Scanner feature vectors and using them as inputs to a transformer model. The results presented in Table~\ref{tab:gap_parameters} indicate that, for most of the datasets, the Topo-Scanner features achieve the best performance when $ m = 2 $ for both filtration functions. Based on this experimental analysis, we select $ m = 2 $ as the optimal parameter for our model.

\noindent \textbf{Multiple Filtrations.}
 Different filtration functions impose distinct hierarchical orderings on nodes (or edges), enabling our model to capture diverse topological patterns in the induced sequences. This allows the \textit{Topo-Scan} process to effectively integrate domain-specific information. To fully leverage multiple filtrations, \method applies separate transformers for each filtration function and combines their outputs using a learnable attention mechanism. This mechanism dynamically assigns higher weights to the most relevant topological signatures, ensuring optimal feature selection and enhanced performance. As shown in Table~\ref{tab:filtration}, \method employing multiple functions consistently outperforms models using a single filtration function, demonstrating the advantages of multiple filtrations. 

 This approach enhances model robustness and stability by incorporating diverse topological perspectives.

\begin{table*}[h!]
    \caption{ {\bf Filtration Functions.} Performance comparison of single filtration and multiple filtrations with \method across different datasets. The best values in each column are highlighted in bold.}
    \label{tab:filtration}
\centering
\resizebox{\linewidth}{!}{%
\begin{tabular}{lccccccc}
\toprule
\textbf{Filtrations} & \textbf{MUTAG} & \textbf{PROTEINS} & \textbf{BZR} & \textbf{COX2} & \textbf{IMDB-B} & \textbf{IMDB-M} & \textbf{REDDIT-B} \\
\midrule
Degree only & 92.02\scriptsize{$\pm$7.24} & 77.28\scriptsize{$\pm$5.93}  & 89.89\scriptsize{$\pm$3.74}&78.36\scriptsize{$\pm$4.93}   & 74.20\scriptsize{$\pm$3.36} & 51.53\scriptsize{$\pm$3.34} & 86.60\scriptsize{$\pm$2.97} \\
        O. Ricci only & 89.91\scriptsize{$\pm$3.86} & 77.26\scriptsize{$\pm$4.29} & 90.60\scriptsize{$\pm$3.69}& 78.60\scriptsize{$\pm$4.79}   & \textbf{79.10\scriptsize{$\pm$3.78}} & \textbf{54.53}\scriptsize{$\pm$3.52} & \textbf{91.40}\scriptsize{$\pm$1.24} \\
        HKS Only        & \textbf{95.32{\scriptsize$\pm$5.58}} & \textbf{77.35{\scriptsize$\pm$2.86}} & \textbf{90.62{\scriptsize$\pm$4.91}} & \textbf{83.95{\scriptsize$\pm$2.99}} & 76.90{\scriptsize$\pm$5.72} & 54.07{\scriptsize$\pm$2.54} & 90.05{\scriptsize$\pm$2.41} \\
        \midrule
Deg.+O.Ricci    & 93.01{\scriptsize$\pm$5.29} & \textbf{78.35{\scriptsize$\pm$4.22}} & \textbf{91.12{\scriptsize$\pm$4.68}} & 81.80{\scriptsize$\pm$5.40} & 78.80{\scriptsize$\pm$3.65} & 53.87{\scriptsize$\pm$3.52} & 90.65{\scriptsize$\pm$2.12} \\

HKS+O.Ricci     & 94.68{\scriptsize$\pm$4.30} & 77.64{\scriptsize$\pm$3.64} & 92.36{\scriptsize$\pm$4.11} & \textbf{83.93{\scriptsize$\pm$4.03}} & \textbf{78.90{\scriptsize$\pm$3.31}} & \textbf{55.40{\scriptsize$\pm$4.78}} & \textbf{91.50{\scriptsize$\pm$1.89}} \\
HKS+Degree      & \textbf{95.26{\scriptsize$\pm$3.88}} & 78.08{\scriptsize$\pm$2.34} & 91.09{\scriptsize$\pm$5.53} & 83.71{\scriptsize$\pm$4.38} & 77.30{\scriptsize$\pm$2.41} & 52.60{\scriptsize$\pm$2.25} & 89.90{\scriptsize$\pm$2.35} \\
\bottomrule
\end{tabular}}
\end{table*}

\subsection{\method vs. PH with different vectorizations} \label{sec:PHvsTopoTR-app}

\method consistently outperforms Persistent Homology methods in both accuracy and computational efficiency. As shown in \Cref{tab:PHcomparison2}, we compare against the best PH results reported in~\citep{cai2020understanding}, which evaluates 16 combinations of four filtration functions (degree, O.Ricci, Fiedler, closeness centrality) and four vectorization techniques (Sliced Wasserstein, Pervec, Filvec, SW-p) per dataset. \textit{\method achieves higher accuracy on all six benchmarks}.

\begin{table}[h!]
%\vspace{-.2in}
\caption{Accuracy results for \method (HKS) vs. Persistent Homology in graph classification tasks. In PH row, we report the best performance of 16 combinations with four filtration functions combined with four vectorizations. \label{tab:PHcomparison2}}
\centering
\resizebox{.8\linewidth}{!}{
\begin{tabular}{lccccc}
\toprule
\textbf{} & \textbf{BZR} & \textbf{COX2} & \textbf{PROTEINS} & \textbf{IMDB-B} & \textbf{IMDB-M} \\
\midrule
PH (Best of 16 comb) & 88.4{\scriptsize $\pm$0.6} & \textcolor{blue}{\bf 82.0\scriptsize$\pm$0.6} & 74.0\scriptsize$\pm$0.4 & 69.5\scriptsize$\pm$0.5 & 46.5\scriptsize$\pm$0.3 \\
\method & \textcolor{blue}{\bf 90.6{\scriptsize $\pm$4.9}} & \textcolor{blue}{\bf 82.0{\scriptsize $\pm$4.6}} & \textcolor{blue}{\bf 77.4{\scriptsize $\pm$2.9}} & \textcolor{blue}{\bf 77.9{\scriptsize $\pm$3.4}} & \textcolor{blue}{\bf 54.1{\scriptsize $\pm$2.5}} \\
\bottomrule
\end{tabular}}
\vspace{-.1in}
\end{table}

\subsection{\method vs PH Performance}

\rev{\Cref{tab:PHcomparison_new} extends our ablation (\Cref{tab:PHcomparison}) from three to seven filtration functions and compares three topological pipelines under the same filtration function: the classical PH-MLP baseline (sublevel PH + Betti vector + MLP), our PH-TR variant (same Betti vectors but processed as sequences by a Transformer), and \method (Topo-Scan sequences with sliding-window interlevel filtrations). Across all seven filtrations, replacing the MLP with a Transformer already yields consistent gains: PH-TR improves over PH-MLP by roughly 2–4 accuracy points on average (see the “Av.Imp.” column), confirming that treating Betti curves as ordered sequences is beneficial even without changing the underlying filtration.}

\rev{\method further improves on PH-TR for almost every filtration, typically adding another 1–3 points on most datasets and yielding average gains of 6–7 points over PH-MLP. The effect is especially pronounced on more challenging benchmarks such as IMDB-M and REDDIT-B, where sliding-window interlevel slices capture richer late-emerging structure than standard sublevel PH. Importantly, this pattern holds not only for the three filtrations used in the main text (degree, Ollivier–Ricci, HKS) but also for the four additional ones (betweenness, closeness, eigenvector centrality, Forman–Ricci curvature), indicating that the benefit of Topo-Scan is robust to the choice of scalar function. Together, these results support our central claim: the main performance gains come from the Topo-Scan sequential representation (and its integration with Transformers), rather than from a particular hand-picked filtration function.}

\begin{table*}[t]
\caption{\textcolor{black}{\footnotesize \textbf{\method vs. PH Performance Comparison:} \quad Accuracy of three topological models under seven filtrations: Degree, Ollivier-Ricci, HKS, Betweenness centrality, Closeness centrality, Eigenvector centrality, and Forman-Ricci curvature. The last column reports the average accuracy improvements of our models PH-TR and \method over the classical TDA pipeline PH-MLP for the same filtration function.}
\label{tab:PHcomparison_new}}
\centering
  \resizebox{\linewidth}{!}{
{\color{black}
\begin{tabular}{llccccccc|c}
\toprule
\textbf{Filtration} & \textbf{Model} & \textbf{BZR} & \textbf{COX2} & \textbf{MUTAG} & \textbf{PROTEINS} & \textbf{IMDB-B} & \textbf{IMDB-M} & \textbf{REDDIT-B} & \textbf{Av.Imp.}\\
\midrule
& PH-MLP & 82.71{\scriptsize $\pm$6.51} & 76.44{\scriptsize $\pm$5.39} & 84.06{\scriptsize $\pm$4.65} & 68.37{\scriptsize $\pm$3.97} & 65.70{\scriptsize $\pm$4.03} & 45.07{\scriptsize $\pm$2.59} & 89.50{\scriptsize $\pm$2.87} & --\\
\textbf{Degree} & PH-TR & 86.43{\scriptsize $\pm$4.33} & 78.15{\scriptsize $\pm$5.19} & 86.11{\scriptsize $\pm$5.23} & \textbf{77.54{\scriptsize $\pm$2.64}} & \textbf{75.00{\scriptsize $\pm$2.11}} & \textbf{50.67{\scriptsize $\pm$3.57}} & \textbf{92.30{\scriptsize $\pm$1.77}} & 4.91 \\
& \method & \textbf{91.10{\scriptsize $\pm$5.14}} & \textbf{80.27{\scriptsize $\pm$5.24}} & \textbf{92.54{\scriptsize $\pm$5.12}} & 77.45{\scriptsize $\pm$4.02} & 74.20{\scriptsize $\pm$5.01} & 50.33{\scriptsize $\pm$1.52} & 89.75{\scriptsize $\pm$2.18} & 6.26 \\
\midrule
& PH-MLP & 85.45{\scriptsize $\pm$3.36} & 78.16{\scriptsize $\pm$5.09} & 84.06{\scriptsize $\pm$5.21} & 65.50{\scriptsize $\pm$4.26} & 68.00{\scriptsize $\pm$3.55} & 44.87{\scriptsize $\pm$3.65} & 85.65{\scriptsize $\pm$2.62} & --\\
\textbf{O.Ricci} & PH-TR & 88.62{\scriptsize $\pm$5.40} & 78.16{\scriptsize $\pm$5.73} & 87.61{\scriptsize $\pm$5.70} & 77.27{\scriptsize $\pm$5.08} & 72.20{\scriptsize $\pm$6.24} & 48.00{\scriptsize $\pm$4.33} & 90.65{\scriptsize $\pm$1.08} & 4.40 \\
& \method & \textbf{90.38{\scriptsize $\pm$5.50}} & \textbf{80.72{\scriptsize $\pm$6.44}} & \textbf{92.54{\scriptsize $\pm$4.47}} & \textbf{77.90{\scriptsize $\pm$3.17}} & \textbf{74.70{\scriptsize $\pm$4.95}} & \textbf{51.53{\scriptsize $\pm$3.49}} & \textbf{91.90{\scriptsize $\pm$2.73}} & 6.85 \\
\midrule
& PH-MLP & 84.96{\scriptsize $\pm$4.42} & 78.19{\scriptsize $\pm$4.34} & 84.09{\scriptsize $\pm$5.72} & 70.80{\scriptsize $\pm$4.70} & 71.10{\scriptsize $\pm$5.28} & 47.93{\scriptsize $\pm$3.20} & 88.10{\scriptsize $\pm$1.67}& -- \\
\textbf{HKS} & PH-TR & 89.60{\scriptsize $\pm$5.84} & 79.89{\scriptsize $\pm$4.66} & 94.12{\scriptsize $\pm$5.42} & 77.18{\scriptsize $\pm$3.15} & 76.80{\scriptsize $\pm$3.97} & 53.60{\scriptsize $\pm$3.31} & 87.25{\scriptsize $\pm$1.95} &4.75\\
& \method & \textbf{90.62{\scriptsize $\pm$4.91}} & \textbf{83.95{\scriptsize $\pm$2.99}} & \textbf{95.32{\scriptsize $\pm$5.58}} & \textbf{77.35{\scriptsize $\pm$2.86}} & \textbf{77.90{\scriptsize $\pm$5.72}} & \textbf{54.07{\scriptsize $\pm$2.54}} & \textbf{90.05{\scriptsize $\pm$2.41}} &6.30\\

\midrule
\multirow{3}{*}{\textbf{Betweenness}} 
& PH-MLP  & 84.95{\scriptsize$\pm$4.19} & 80.99{\scriptsize$\pm$6.35} & 89.94{\scriptsize$\pm$7.93} & 71.61{\scriptsize$\pm$1.85} & 68.10{\scriptsize$\pm$2.55} & 43.80{\scriptsize$\pm$1.74} & 79.10{\scriptsize$\pm$2.88} &--\\
& PH-TR   & 85.43{\scriptsize$\pm$4.13} & \bf 81.60{\scriptsize$\pm$6.00} & 90.52{\scriptsize$\pm$5.62} & 74.13{\scriptsize$\pm$2.95} & 69.90{\scriptsize$\pm$3.48} & 45.40{\scriptsize$\pm$1.79} & 84.05{\scriptsize$\pm$2.33} &1.79 \\
& \method &\bf 87.41{\scriptsize$\pm$4.07} & 80.74{\scriptsize$\pm$6.15} &\bf 90.99{\scriptsize$\pm$6.61} & \bf 76.73{\scriptsize$\pm$2.67} &  \bf 73.90{\scriptsize$\pm$3.73} & \bf 51.47{\scriptsize$\pm$2.96} &\bf 86.55{\scriptsize$\pm$2.30} &4.19\\
\midrule

\multirow{3}{*}{\textbf{Closeness}}
& PH-MLP  & 84.21{\scriptsize$\pm$2.19} & 79.07{\scriptsize$\pm$6.56} & 88.94{\scriptsize$\pm$7.20} & 74.39{\scriptsize$\pm$3.19} & 65.30{\scriptsize$\pm$4.61} & 47.47{\scriptsize$\pm$3.93} & 66.85{\scriptsize$\pm$2.98} &-- \\
& PH-TR   & \bf 87.43{\scriptsize$\pm$4.83} & 79.65{\scriptsize$\pm$4.98} & 89.94{\scriptsize$\pm$6.25} & 75.93{\scriptsize$\pm$3.32} & 69.70{\scriptsize$\pm$4.60} & 50.47{\scriptsize$\pm$3.72} & 77.20{\scriptsize$\pm$3.31} &3.44\\
& \method & 85.13{\scriptsize$\pm$8.36} & \bf 81.17{\scriptsize$\pm$5.28} & \bf 90.88{\scriptsize$\pm$6.65} & \bf 77.64{\scriptsize$\pm$4.36} & \bf 73.20{\scriptsize$\pm$2.20} & \bf 51.07{\scriptsize$\pm$3.02} & \bf 86.40{\scriptsize$\pm$2.22} &5.61\\

\midrule

\multirow{3}{*}{\textbf{Eigenvector}}
& PH-MLP  & 83.97{\scriptsize$\pm$3.46} & 80.56{\scriptsize$\pm$6.04} & 89.39{\scriptsize$\pm$6.64} & 67.20{\scriptsize$\pm$5.87} & 66.70{\scriptsize$\pm$2.61} & 47.40{\scriptsize$\pm$3.00} & 79.40{\scriptsize$\pm$3.21}&-- \\
& PH-TR   & 87.41{\scriptsize$\pm$4.21} & 79.88{\scriptsize$\pm$6.04} & \bf 91.57{\scriptsize$\pm$5.70} & 70.53{\scriptsize$\pm$4.60} & 72.40{\scriptsize$\pm$4.48} & 50.13{\scriptsize$\pm$3.44} & 89.25{\scriptsize$\pm$1.40}&3.79 \\
& \method & \bf 90.59{\scriptsize$\pm$5.63} & \bf 82.87{\scriptsize$\pm$3.35} & 90.99{\scriptsize$\pm$6.61} & \bf 77.35{\scriptsize$\pm$2.78} & \bf 76.10{\scriptsize$\pm$3.63} &\bf 51.00{\scriptsize$\pm$2.14} & \bf 91.85{\scriptsize$\pm$1.43} &6.59\\

\midrule

\multirow{3}{*}{\textbf{F. Ricci}}
& PH-MLP  & 82.46{\scriptsize$\pm$3.94} & 80.13{\scriptsize$\pm$5.86} & 87.81{\scriptsize$\pm$6.21} & 73.95{\scriptsize$\pm$4.12} & 66.60{\scriptsize$\pm$3.75} & 45.53{\scriptsize$\pm$3.36} & 73.70{\scriptsize$\pm$3.78}&-- \\
& PH-TR   & 86.18{\scriptsize$\pm$6.06} & \bf 81.97{\scriptsize$\pm$6.62} & 91.99{\scriptsize$\pm$4.63} & 76.09{\scriptsize$\pm$4.04} & 70.80{\scriptsize$\pm$4.47} & 50.67{\scriptsize$\pm$2.59} & 77.20{\scriptsize$\pm$2.21} &3.53\\
& \method & \bf 88.41{\scriptsize$\pm$6.04} & 81.02{\scriptsize$\pm$6.46} & \bf 92.08{\scriptsize$\pm$5.67} & \bf 77.81{\scriptsize$\pm$3.80} & \bf 79.40{\scriptsize$\pm$3.69} & \bf 54.47{\scriptsize$\pm$3.34} & \bf 88.95{\scriptsize$\pm$1.94} &7.42 \\

\bottomrule

\end{tabular}}}
\end{table*}

\subsection{Early Saturation in PH filtrations and Topo-Scan}
\label{appendix:signalAnalysis}

\textbf{Goal.} \quad
We compare classical PH (sub/superlevel on a fixed clique-complex 2–skeleton) with \emph{Topo-Scan} to show how PH frequently \emph{early–saturates} on graphs, i.e., after a relatively small portion of the threshold range, new features cease to appear, whereas Topo-Scan continues to surface structure by sliding windows over the same signal.

\textbf{Protocol.}  \quad
For each dataset and filtration function $f$ (e.g., degree or Ollivier–Ricci), we fix a common grid of $T$ thresholds and evaluate both methods on the same clique-complex 2–skeleton (upper–star from nodes).  
PH: sublevel filtration evaluated at the same grid points; Betti counts are read at each threshold.  
Topo-Scan: window width $m$ and stride $s$ define $T$ overlapping slices whose vertex sets correspond to consecutive value ranges in the same grid. Betti counts are computed per slice.  
To make cross-dataset plots visually comparable, we report (i) \emph{normalized} Betti-0 curves when scales differ markedly (Fig.~\ref{fig:PHvsTopoScan1}) and (ii) \emph{unnormalized} Betti-0 when PH and Topo-Scan share similar ranges (Fig.~\ref{fig:PHvsTopoScan2}). Betti-1 frequency barplots are shown to illustrate higher-order behavior (Fig.~\ref{fig:PHvsTopoScan3}).

\begin{figure*}[t]
    \centering
    \begin{subfigure}[b]{0.32\textwidth}
        \includegraphics[width=\textwidth]{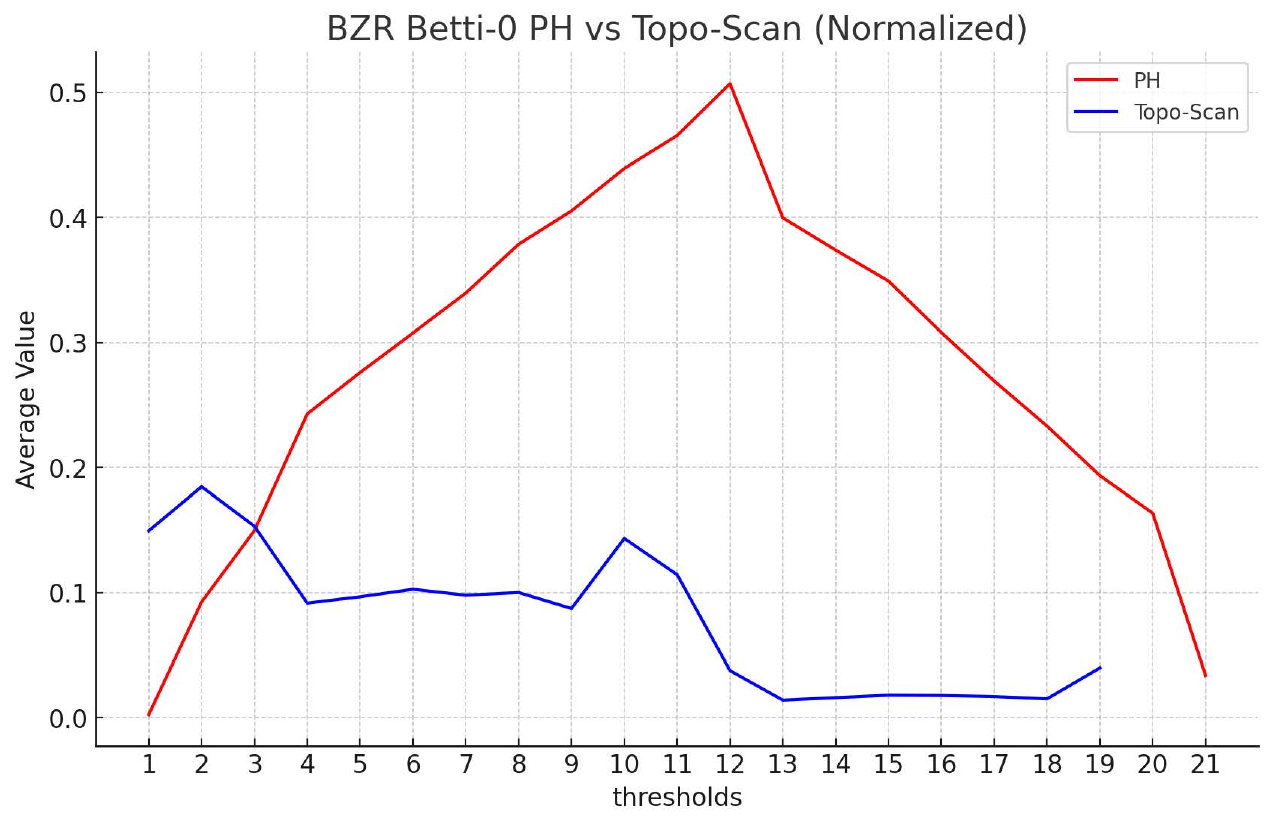}
        \caption{\scriptsize{BZR - Betti-0}}
        \label{fig:bzr}
    \end{subfigure}
    \begin{subfigure}[b]{0.32\textwidth}
        \includegraphics[width=\textwidth]{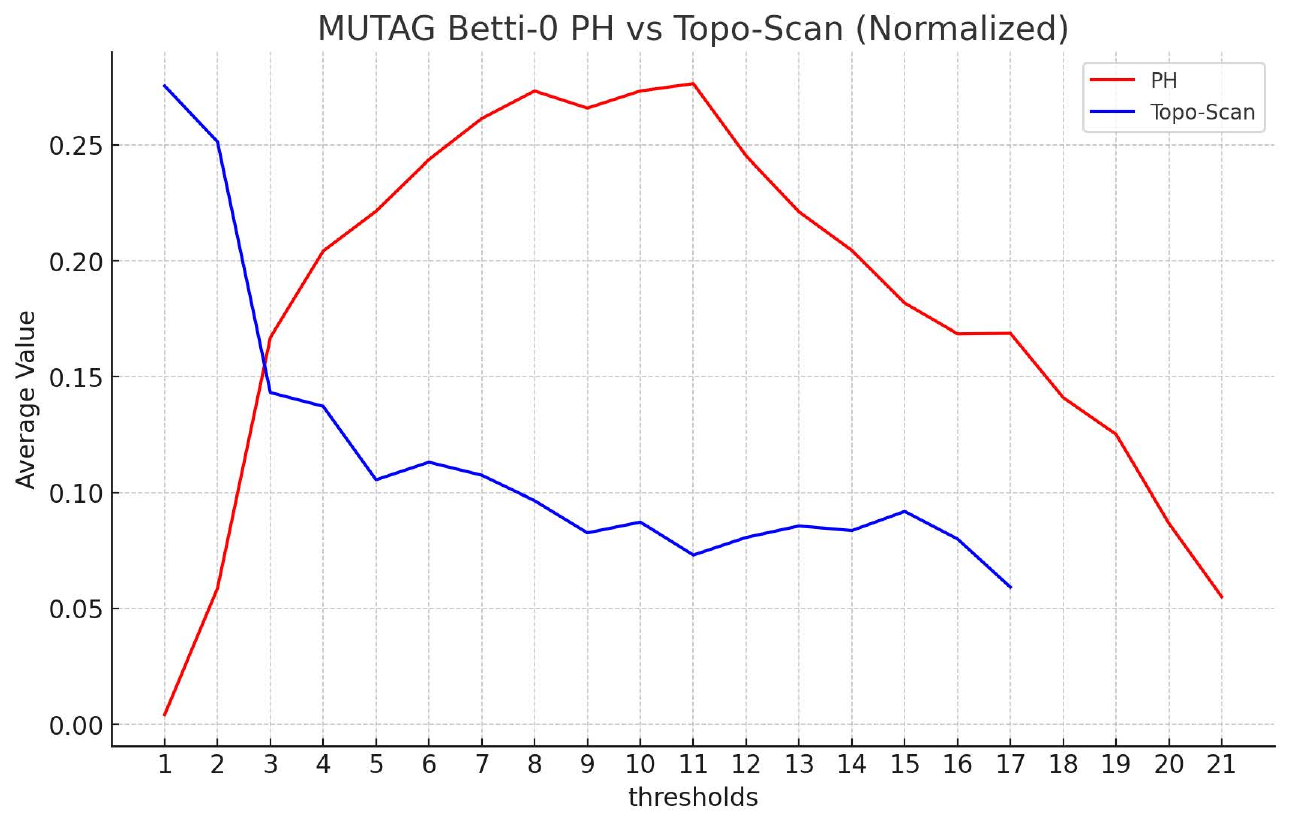}
        \caption{\scriptsize{MUTAG - Betti-0}}
        \label{fig:mutag}
    \end{subfigure}
        \begin{subfigure}[b]{0.32\textwidth}
        \includegraphics[width=\textwidth]{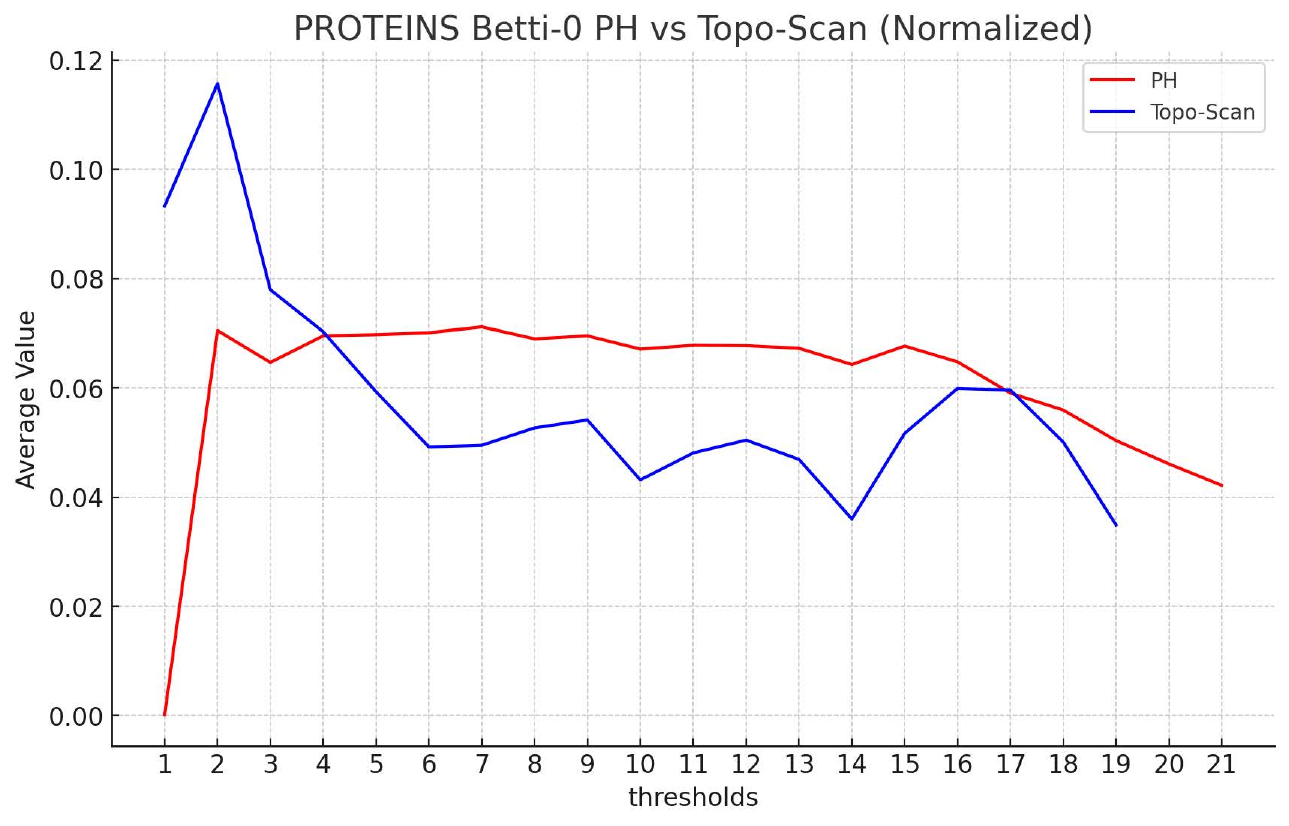}
        \caption{\scriptsize{PROTEINS - Betti-0}}
        \label{fig:proteins}
    \end{subfigure}

\caption{\footnotesize\textbf{PH vs. Topo-Scan.} Normalized average Betti-0 values over 20 thresholds of the degree-centrality filtration on (a) BZR, (b) MUTAG, and (c) PROTEINS. Under the classical PH pipeline, feature counts decline monotonically with increasing threshold, whereas Topo-Scan maintains elevated values at higher thresholds, revealing late-emerging topological features that PH alone misses.}
    \label{fig:PHvsTopoScan1}
    %\vspace{-.2in}
\end{figure*}

\begin{figure*}[b]
    \centering
    \begin{subfigure}[b]{0.49\textwidth}
        \includegraphics[width=\textwidth]{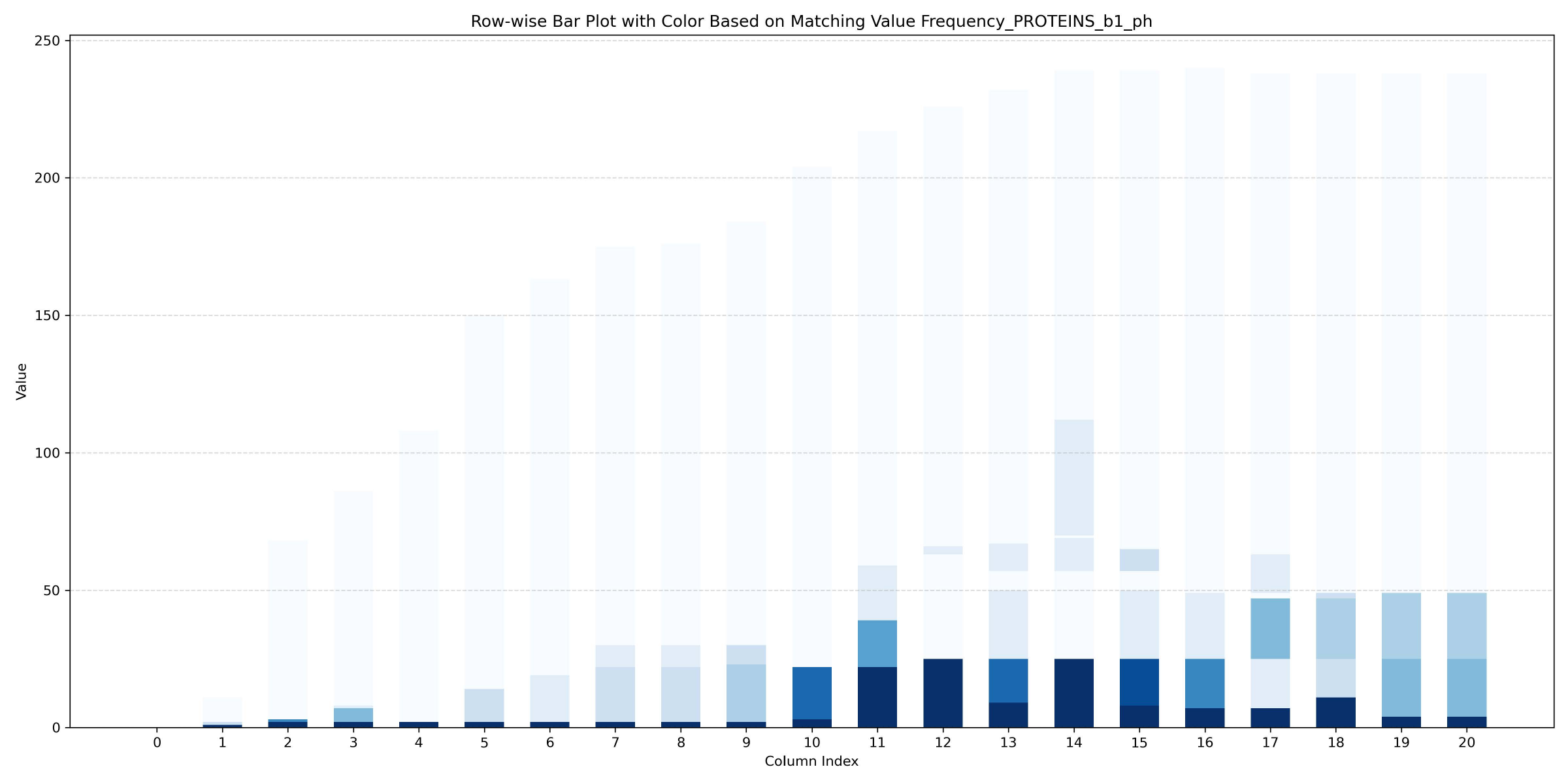}
        \caption{\tiny{PROTEINS - PH - Betti-1}}
        \label{fig:imdb-b}
    \end{subfigure}
    \begin{subfigure}[b]{0.49\textwidth}
        \includegraphics[width=\textwidth]{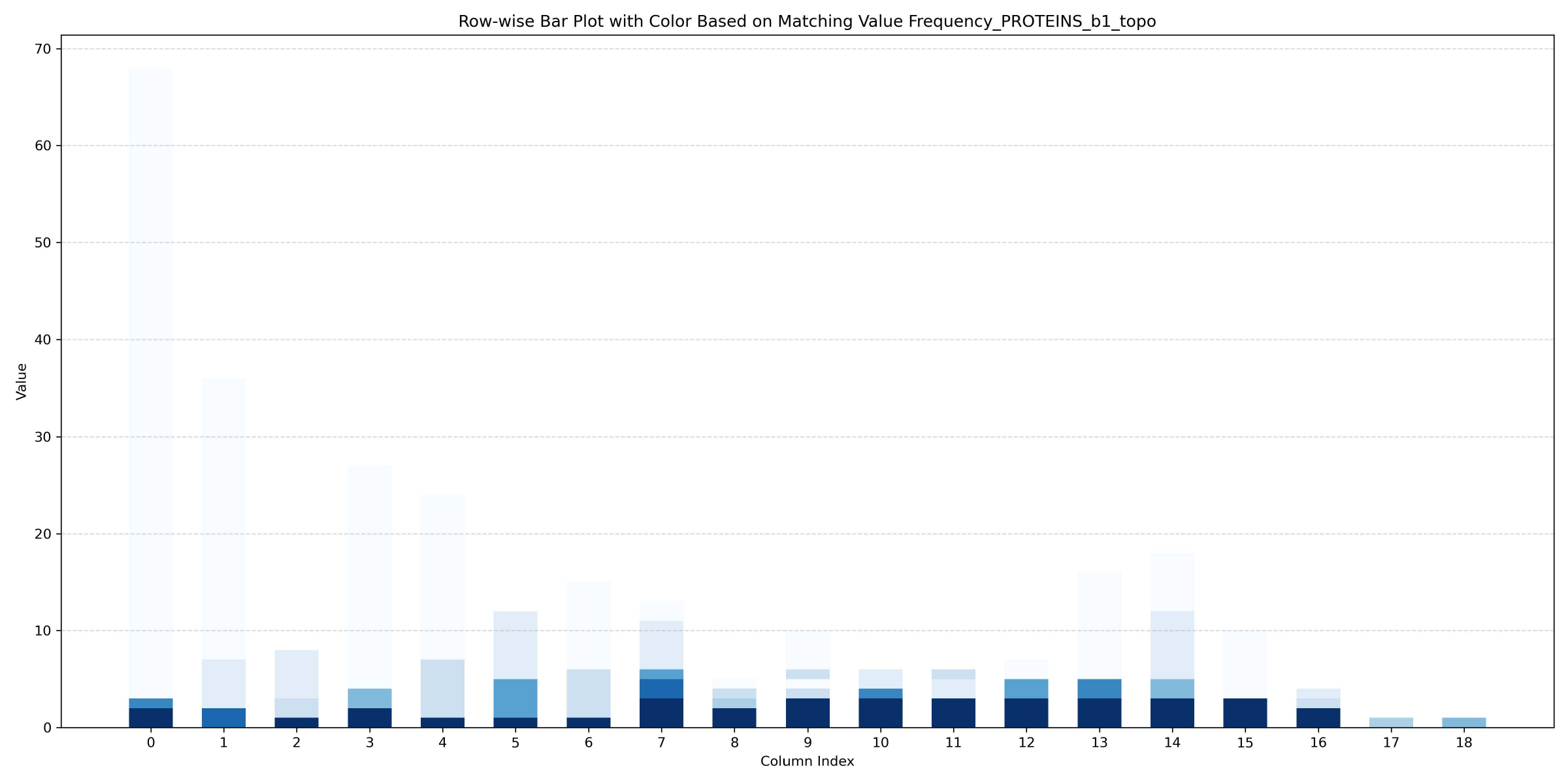}
        \caption{\tiny{PROTEINS - Topo-Scan - Betti-1}}
        \label{fig:imdb-m}
    \end{subfigure}

\caption{\footnotesize\textbf{PH vs.\ Topo-Scan.} Bar plots of Betti-1 counts at each O.Ricci filtration threshold on the PROTEINS dataset, with bar color intensity encoding the frequency of each integer value at that threshold. In (a) classical PH features rapidly taper off and plateau early, whereas in (b) Topo-Scan shows increasingly darker bars at higher thresholds, evidence of continued cycle emergence beyond PH’s saturation.}
    \label{fig:PHvsTopoScan3}
\end{figure*}
\textbf{How to read the figures.}  \quad
A positive Betti-0 value at a position means additional connected components are present in that slice/threshold; persistent nonzero values toward the \emph{right side} of the horizontal axis indicate \emph{late-emerging} structure.
For Betti-1, darker bars at higher thresholds indicate more cycles appearing later in the filtration.
Because Topo-Scan slices are value–localized \emph{ranges} rather than one-sided sublevels, they retain visibility into regions that are otherwise drowned out once early high- or low-valued nodes saturate the PH complex.

\textbf{Results on small biochemical graphs (BZR, MUTAG, PROTEINS).}  \quad
Figure~\ref{fig:PHvsTopoScan1} plots normalized Betti-0 curves over 20 degree thresholds.  
Across all three datasets, PH curves drop quickly and remain low: after an early rise, new components rarely appear as the complex fills up.
In contrast, Topo-Scan maintains elevated values deeper into the axis, indicating that as the sliding window moves, it continues to expose distinct local subgraphs in later value ranges.
This pattern is precisely the late-structure retention we aim to capture.

\textbf{Results on social graphs (IMDB-B, IMDB-M).}  \quad
Figure~\ref{fig:PHvsTopoScan2} shows unnormalized Betti-0 with 100 thresholds (comparable scales).  
Here, PH exhibits a sharp taper near the end: once the core of the graph enters the complex, subsequent thresholds add little.  
Topo-Scan avoids this collapse; activity persists and often \emph{exceeds} PH in the tail, reflecting components that are still exposed by the windowed slices even when global sublevels have already merged them away.

\textbf{Higher-order signal (Betti-1 on PROTEINS, O.\,Ricci).}  \quad
Figure~\ref{fig:PHvsTopoScan3} provides barplots where color intensity encodes the frequency of each integer Betti-1 value per threshold.
Under PH (left), bars fade and plateau early, showing few cycles after the initial growth phase.
Under Topo-Scan (right), darker bars persist across later thresholds, demonstrating continued cycle emergence that PH no longer reveals once the complex has saturated.

\textbf{Why does this happen?}  \quad
In sub/superlevel PH, once extreme-valued vertices enter early, the induced complex quickly fills in, so later additions create little new topology, especially on graphs where dense regions are correlated with the signal.
Topo-Scan, by scanning \emph{ranges} of values with overlap, repeatedly re-centers attention on late parts of the signal, preventing early regions from dominating the entire sequence.  
Importantly, this is not a claim that sublevel is intrinsically flawed; task-aligned or learned filtrations can mitigate early saturation. Our point is empirical and architectural: a fixed-budget sliding-window view preserves late signal \emph{by design}.

\textbf{Controls and caveats.}  \quad
(i) We use the same signal, grid, and complex for both methods to avoid confounding factors.  
(ii) Normalization is applied only for visualization when scales differ; conclusions do not depend on normalization.  
(iii) Sublevel and superlevel yield the same multiset of slices in reverse order; Topo-Scan’s behavior is insensitive to that choice.  
(iv) Window hyperparameters $(m,s)$ trade locality for coverage; we keep them fixed across datasets in these plots for clarity.

\textbf{Takeaway.}  \quad
Across biochemical and social benchmarks, PH curves commonly \emph{flatten early}, while Topo-Scan remains \emph{active in the tail} (Betti-0 and Betti-1), revealing late-emerging components and cycles. This supports our central design choice: turning topology into short, ordered, range–localized tokens helps retain information that standard PH pipelines often lose once the complex saturates.